\documentclass{article}
\usepackage{graphicx} 

\usepackage{multirow}
\usepackage{tabularx}
\usepackage[utf8]{inputenc}
\usepackage{fullpage}
\usepackage{amsmath}
\usepackage{amsthm}
\theoremstyle{plain}
\usepackage{amssymb}
\usepackage[ruled,vlined]{algorithm2e}
\usepackage{xcolor}
\usepackage{hyperref}
\usepackage{mathtools}
\usepackage{mdframed}

\newtheorem{claim}{Claim}
\newtheorem{lemma}{Lemma}
\newtheorem{theorem}{Theorem}
\newtheorem{definition}{Definition}

\newtheorem{fact}{Fact}
\newtheorem{corollary}{Corollary}
\newtheorem{observation}{Observation}

\newcommand{\remove}[1]{}

\newcommand{\calX}{{\cal X}}
\newcommand{\calY}{{\cal Y}}

\title{Private Prediction via Shrinkage}
\author{Chao Yan\thanks{Department of Computer Science, Georgetown University. {\tt cy399@georgetown.edu}. Research supported in part by a gift to Georgetown University.}}

\date{\today}

\begin{document}

\maketitle

\begin{abstract}
We study \emph{differentially private prediction} introduced by Dwork and Feldman~\cite{DworkF18} (COLT 2018): an algorithm receives one labeled sample set $S$ and then answers a stream of unlabeled queries while the output transcript remains $(\varepsilon,\delta)$-differentially private with respect to $S$. Standard composition yields a $\sqrt{T}$ dependence for $T$ queries. 

We show that this dependence can be reduced to \emph{polylogarithmic} in $T$ in streaming settings. For an oblivious online adversary and any concept class $\mathcal{C}$, we give a private predictor that answers $T$ queries with $|S|= \tilde{O}(VC(\mathcal{C})^{3.5}\log^{3.5}T)$ labeled examples. For an adaptive online adversary and halfspaces over $\mathbb{R}^d$, we obtain $|S|=\tilde{O}\left(d^{5.5}\log T\right)$.
\end{abstract}

\section{Introduction}
In the PAC learning model~\cite{Valiant84}, a learner receives a labeled sample $S=\{(x_i,y_i)\}_{i=1}^N$, where each $x_i$ is drawn i.i.d. from an unknown distribution $\mathcal{D}$ over a domain $X$, and each label is generated by an unknown target concept $c\in\mathcal{C}$, where $\mathcal{C}\subseteq\{c:X\rightarrow\{\pm 1\}\}$. The learner outputs a hypothesis $h$ intended to predict the labels of fresh unlabeled points. It is well known that the sample complexity of PAC learning is characterized (up to constant factors) by the VC dimension $VC(\mathcal{C})$.

In many applications, however, the sample $S$ contains sensitive information, and the released hypothesis $h$ may leak details about individual examples. Kasiviswanathan et al.~\cite{KLNRS08} introduced private learning, which requires the learner to satisfy differential privacy with respect to the input sample $S$. They gave a generic differentially private learner with sample complexity $O(\log|\mathcal{C}|)$. Since $VC(\mathcal{C})\leq\log|\mathcal{C}|$, this implies privacy can incur a potentially large overhead: the gap between $VC(\mathcal{C})$ and $\log|\mathcal{C}|$ can be arbitrarily large.

By now, it is well understood that private learning can require strictly more samples than non-private PAC learning. One example is the threshold class over an ordered domain $X$: $\mathcal{C}=\{c_t:c_t(x)=1\mbox{ if }x\geq t,\;t\in[X]\}$, which has $VC(\mathcal{C})=1$, and can be PAC learned from $O(1)$ examples. In contrast, under pure differential privacy, learning thresholds requires $\Theta(\log|X|)$  samples~\cite{FX14}, and under approximate differential privacy it requires $\Theta(\log^*|X|)$ samples~\cite{BNS13b,BNSV15,KaplanLMNS19,AlonLMM19,CohenLNSS22, yan2025optimal}. In particular, for infinite domains, thresholds are PAC learnable but not privately learnable.

Dwork and Feldman~\cite{DworkF18} proposed an alternative paradigm for privacy-preserving prediction. Instead of outputting a hypothesis $h$, the algorithm interacts with a stream of unlabeled queries $x$ and outputs a label only when a query arrives. For a single query, they showed that $|S|=\Theta(VC(\mathcal{C}))$ labeled examples suffice. For answering $T$ queries, the composition theorem yields $|S|=O(VC(\mathcal{C})\cdot\sqrt{T})$ ~\cite{DRV10}. (also observed by Bassily et al.~\cite{BassilyTT18})

A central question is whether one can improve the dependence on $T$. For an \textbf{offline} adversary, who reveals all queries in advance, Beimel et al.~\cite{BNS15} showed that $O(VC(\mathcal{C})\log T)$ labeled examples suffice to privately label $T$ unlabeled points. For the \textbf{stochastic online} setting, where queries are drawn i.i.d. from the same distribution $\mathcal{D}$, Nandi and Bassily~\cite{NandiB20} obtained bounds independent of $T$ via semi-supervised ideas, giving sample complexity
$O((VC(\mathcal{C}))^{1.5})$. Naor et al.~\cite{NNSY23} further showed that one can even protect the privacy of the queries themselves with sample complexity $\tilde{O}((VC(\mathcal{C}))^{2})$ (also see~\cite{Stemmer24} for improved sample complexity on a specific concept class).

These latter results, however, rely crucially on the assumption that online queries are drawn from the underlying distribution $\mathcal{D}$. It remains unclear whether one can beat the $\sqrt{T}$ dependence when the queries are adversarially chosen in an online (streaming) manner. This motivates the following question:
\begin{mdframed}
\begin{center}
        Can we privately answer a super-polynomial number of adversarially chosen queries in the online (streaming) setting?
\end{center}
\end{mdframed}
\subsection{Adversary models.}
We consider several standard models for how the query sequence $x_1,\dots,x_T$ is generated.

\begin{itemize}
    \item \textbf{Offline adversary.}
    The adversary selects the entire query sequence $(x_1,\dots,x_T)$ in advance and reveals it to the predictor before any
    interaction takes place.

    \item \textbf{Oblivious online adversary.}
    The adversary selects the entire sequence $(x_1,\dots,x_T)$ in advance, but reveals the queries one at a time.
    Equivalently, $x_j$ may depend on public parameters (and on $j$), but not on the predictor's internal randomness or prior outputs.

    \item \textbf{Stochastic online adversary.}
    Each query $x_j$ is drawn i.i.d.\ from the same underlying distribution $\mathcal{D}$ that generates the labeled sample $S$
    (and is independent of the predictor's randomness). This is sometimes called the \emph{online i.i.d.} query model.

    \item \textbf{Adaptive online adversary.}
    The adversary chooses queries sequentially, where $x_j$ may depend on the predictor's prior outputs.
    This is the strongest (streaming) adversary model considered in this work.
\end{itemize}

\subsection{Our Result}
In this work, we show two main results:
\begin{enumerate}
    \item \textbf{Oblivious online adversary.} We give a differentially private predictor that can answer $T$ prediction queries using $\tilde{O}((VC(\mathcal{C})^{3.5}\cdot \log^{3.5}T)$ labeled samples.

    \item \textbf{Adaptive online adversary for halfspaces.} For an adaptive online
    adversary and the class of halfspaces over $\mathbb{R}^d$, we give a differentially
    private predictor that answers  $T$ queries using $\tilde{O}(d^{5.5}\cdot \log T)$ labeled samples.
\end{enumerate}

For comparison, we list the previous results in Table~\ref{tab:comparison}
\begin{table}[t]
\centering
\small
\begin{tabular}{llll}
\hline
\textbf{Adversary model} & \textbf{Labeled sample size} & \textbf{Reference} \\
\hline
Online prediction (general), $T$ queries   & $O(\mathrm{VC}(\mathcal{C})\sqrt{T})$ & \cite{DworkF18,BassilyTT18}  \\
Offline adversary (queries revealed in advance)  & $O(\mathrm{VC}(\mathcal{C})\log T)$ & \cite{BNS15} \\
Online stochastic adversary (queries are sampled from $\mathcal{D}$)  & $O(\mathrm{VC}(\mathcal{C})^{1.5})$ (indep.\ of $T$) & \cite{NandiB20} \\
Online stochastic + query privacy  & $\widetilde{O}(\mathrm{VC}(\mathcal{C})^{2})$ (indep.\ of $T$) & \cite{NNSY23} \\
\hline
Oblivious online adversary   & $\widetilde{O}(\mathrm{VC}(\mathcal{C})^{3.5}\log^{3.5} T)$ & this work \\
Adaptive online adversary, halfspaces over $\mathbb{R}^d$  & $\widetilde{O}(d^{5.5}\log T)$ & this work \\
\hline
\end{tabular}
\caption{Comparison of labeled-sample complexity.}
\label{tab:comparison}
\end{table}

\subsection{The main ideas}

\begin{enumerate}
    \item \textbf{Ideas from previous works.}
    We use the framework by Dwork and Feldman~\cite{DworkF18} and Naor et al.~\cite{NNSY23}. More details can be found in Section~\ref{sec:generic framework}.
    \begin{enumerate}
        \item \textbf{Subsample-and-aggregation}
        We follow the standard subsample-and-aggregate framework appearing in a line of works~\cite{DworkF18,BassilyTT18, NandiB20, NNSY23}. We randomly partition the labeled dataset into many disjoint blocks, each of size $O(VC(\mathcal{C}))$. On each block, we run a non-private PAC learner to obtain a hypothesis. Given a query point $x$, we predict using a noisy majority vote over the ensemble of hypotheses.

        \item \textbf{Sparse vector} To answer exponentially many queries while preserving privacy, we use the Sparse Vector technique. Specifically, we use the algorithm \texttt{BetweenThresholds} introduced by Bun et al.~\cite{BunSU16}. Informally, \texttt{BetweenThresholds} allows us to perform a large number of noisy majority votes and only pays privacy when a query is hard (i.e., when approximately half of the hypotheses vote for label 1). A limitation of prior analyses is that the number of hard queries can be large. We solve it by a shrinkage approach.
    \end{enumerate}

    \item \textbf{Shrinking the concept class to bound the number of hard queries.} The key new idea is to ensure that \texttt{BetweenThresholds} stops only $O(\log T)$ times by progressively shrinking the set of candidate hypotheses. Each time \texttt{BetweenThresholds} declares a query hard, we update (shrink) the hypothesis space so that the number of remaining label patterns decreases.
    \begin{enumerate}
        \item \textbf{Oblivious adversary (adversarially selected queries but fixed in advance).} Suppose the query points $x_1,\dots,x_T$ are fixed in advance. Consider the set of possible label sequences
        $
        \mathcal{L} \;=\; \bigl\{(h(x_1),\dots,h(x_T)) : h\in\mathcal{C}\bigr\}.
        $
        By Sauer's lemma, $|\mathcal{L}| \le O(T^{\mathrm{VC}(\mathcal{C})})$. For each hard query $x_t$, we guess a label $\hat{y}_t\in\{\pm 1\}$ uniformly at random and then remove all hypotheses $h$ inconsistent with the guess, i.e., those with $h(x_t)=-\hat{y}_t$. With probability $1/2$, this removes at least half of the remaining label sequences in $\mathcal{L}$. Therefore, after $O(\mathrm{VC}(\mathcal{C})\cdot \log T)$ hard queries, the remaining hypotheses agree on all future queries, and no query will be hard anymore.

        \item \textbf{Halfspaces under an adaptive adversary.} For halfspaces we leverage geometric techniques of Nissim et al.~\cite{NTY25} and Kaplan et al.~\cite{kaplan2020private}. We cast prediction as maintaining the feasibility of a system of linear constraints corresponding to the target halfspace. Each hard query can be interpreted as a hyperplane constraint. By restricting attention to the intersection structure induced by hard queries, we show that after $O(d)$ such events, the remaining feasible region collapses to a single consistent solution, after which all future queries are easy, and no query will be hard anymore.
    \end{enumerate}
\end{enumerate}

\subsection{Discussion and Open Questions}
Our results show that, for fixed privacy and accuracy parameters, one can privately answer super-polynomially many queries even when the queries arrive in an adversarial stream. We highlight several directions for future work.

\begin{enumerate}
    \item \textbf{Improving the dependence on $\mathrm{VC}(\mathcal{C})$ for oblivious adversaries.}
    Compared to the baseline $O(\mathrm{VC}(\mathcal{C})\sqrt{T})$ bound obtained via composition, our guarantees achieve only polylogarithmic dependence on $T$ but incur a larger polynomial dependence on $\mathrm{VC}(\mathcal{C})$.
    Can one obtain a bound of the form $\widetilde{O}\!\left(\mathrm{VC}(\mathcal{C})\cdot \mathrm{poly}(\log T)\right)$?
    Besides being a natural target in its own right, such an improvement would also sharpen bounds in the stochastic setting:
    combined with techniques of semi-supervised learning~\cite{BNS15,NandiB20}, it would potentially reduce the labeled-sample complexity from
    $O(\mathrm{VC}(\mathcal{C})^{1.5})$ to $\widetilde{O}(\mathrm{VC}(\mathcal{C}))$.

    \item \textbf{Adaptive adversaries beyond halfspaces.}
    Our adaptive-adversary result relies on geometric structure specific to halfspaces.
    Can one obtain polylogarithmic dependence on $T$ for \emph{arbitrary} VC classes against adaptive online adversaries?

    \item \textbf{Query privacy.}
    In our model, the predictor's future behavior depends on past queries and outputs, and we only guarantee differential privacy with respect to the labeled sample $S$.
    Is it possible to additionally protect the privacy of the query stream itself (as in~\cite{NNSY23}), while still achieving polylogarithmic dependence on $T$ against adversarial streams?
\end{enumerate}

\section{Prediction Model}
In this section, we formalize the prediction model. We follow the definition by Naor et al.~\cite{NNSY23}\footnote{In the setting of~\cite{NNSY23}, the predictor predicts an infinite number of queries and requires protecting the privacy of queries. We limit it to be $T$ and only protect the privacy of the dataset.}

\begin{mdframed}
\textbf{Private Prediction.}

Let $\mathcal{C}\subseteq\{\pm1\}^X$ be a concept class over a domain $X$.
An unknown target concept $c\in\mathcal{C}$ and an unknown distribution $\mathcal{D}$ over $X$ generate a labeled sample
\[
S=\{(x_i,y_i)\}_{i=1}^N \in (X\times\{\pm 1\})^N,
\qquad x_i \stackrel{i.i.d.}{\sim} D,\ \ y_i=c(x_i).
\]

The interaction then proceeds for $T$ prediction rounds. In round $j\in[T]$:
\begin{enumerate}
    \item The algorithm $\mathcal{A}$ (possibly using internal randomness and the sample $S$) computes a hypothesis $h_j:X\to\{\pm1\}$.
    \item $\mathcal{A}$ receives a query point $x_j\in X$ (chosen by an adversary, possibly adaptively).
    \item $\mathcal{A}$ outputs the predicted label $\mathsf{Label}_j := h_j(x_j)$.
\end{enumerate}
\end{mdframed}

\begin{definition}
    We say that $\mathcal{A}$ is an $(\varepsilon,\delta,\alpha,\beta)$-differentially private predictor for $\mathcal{C}$ over $X$ if it satisfies:
\begin{enumerate}
    \item \textbf{Privacy.} The transcript of outputs $((x_1,\mathsf{Label}_1),\ldots,(x_T,\mathsf{Label}_T))$ is $(\varepsilon,\delta)$-differentially private with respect to the sample $S$.
    \item \textbf{Accuracy.} With probability at least $1-\beta$ over the randomness of $\mathcal{A}$ and the draw of $S$,
    \[
        \mathrm{err}_\mathcal{D}(c,h_j) \le \alpha \quad \text{for all } j\in[T],
    \]
    where $\mathrm{err}_\mathcal{D}(c,h):=\Pr_{x\sim \mathcal{D}}[h(x)\neq c(x)]$.
\end{enumerate}
\end{definition}

\section{Preliminaries}

\subsection{Learning Theory}
\begin{definition}[Vapnik-Chervonenkis dimension~\cite{VC,Haussler1986EpsilonnetsAS}]
    Let $X$ be a set and $R$ be a set of subsets of $X$. Let $S\subseteq X$ be a subset of $X$. Define
    $
    \Pi_R(S)=\{S\cap r \mid r\in R\}.
    $
    If $|\Pi_R(S)|=2^{|S|}$, then we say $S$ is shattered by $R$. The Vapnik-Chervonenkis dimension of $(X,R)$ is the largest integer $d$ such that there exists a subset $S$ of $X$ with size $d$ that is shattered by $R$.
\end{definition}

\begin{definition}[$\alpha$-approximation\footnote{Commonly called $\varepsilon$-approximation. We use $\alpha$-approximation as $\varepsilon$ is used as a parameter of differential privacy.}~\cite{VC,Haussler1986EpsilonnetsAS}]\label{def:alpha-approx}
    Let $X$ be a set and $R$ be a set of subsets of $X$. 
    Let $S \in X^*$ be a finite multi-set of elements in $X$.
    For any $0\leq\alpha\leq 1$ and $S'\subseteq S$, $S'$ is an $\alpha$-approximation of $S$ for $R$ if for all $r\in R$, it holds that
    $
    \left|\frac{|S\cap r|}{|S|}-\frac{|S'\cap r|}{|S'|}\right|\leq \alpha.
    $
\end{definition}
    
\begin{theorem}[\cite{VC,Haussler1986EpsilonnetsAS}]\label{thm:alphaApprox}
    Let $(X,R)$ have VC dimension $d$. Let $S\subseteq X$ be a subset of $X$. Let $0<\alpha,\beta\leq 1$. Let $S'\subseteq S$ be a random subset of $S$ with size at least 
    $ O\left(\frac{d\cdot\log\frac{d}{\alpha}+\log\frac{1}{\beta}}{\alpha^2}\right).
    $
    Then with probability at least $1-\beta$, $S'$ is an $\alpha$-approximation of $S$ for $R$.
 \end{theorem}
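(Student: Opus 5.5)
We will prove Theorem~\ref{thm:alphaApprox} by the classical symmetrization argument of Vapnik and Chervonenkis. Since $S$ is finite, we may view the uniform distribution $U$ on $S$ as the underlying distribution. We treat $S'$ as a sample of $m$ i.i.d.\ draws from $U$; the without-replacement case follows because sampling without replacement is at least as concentrated, by Hoeffding's comparison of the two. For each $r\in R$ the target quantity is $p_r=|S\cap r|/|S|$, and the empirical quantity is $\hat p_r=|S'\cap r|/|S'|$. Our goal is to show that
\[
\Pr\bigl[\exists r\in R:\ |\hat p_r-p_r|>\alpha\bigr]\le\beta .
\]

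\textbf{Step 1 (symmetrization).} We draw an independent ghost sample $S''$ of size $m$ from $U$. For $m\ge 2/\alpha^2$, Chebyshev's inequality shows that for any fixed $r$ we have $|\hat p''_r-p_r|\le\alpha/2$ with probability at least $1/2$. Conditioning on $S'$ and taking a bad $r$ for $S'$ then gives
\[
\Pr[\exists r:\ |\hat p_r-p_r|>\alpha]\le 2\Pr[\exists r:\ |\hat p_r-\hat p''_r|>\alpha/2].
\]

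\textbf{Step 2 (random swaps and union bound).} The joint sample $S'\cup S''$ has $2m$ points, and its distribution is invariant under swapping the $i$-th elements of $S'$ and $S''$. We condition on the multiset of $2m$ points. By Sauer's lemma, $R$ induces at most $(2em/d)^d$ distinct subsets of these points. For each induced subset, we apply random swaps via Hoeffding's inequality. The quantity $\hat p_r-\hat p''_r$ is an average of $m$ independent symmetric terms bounded by $1/m$ in absolute value, so it exceeds $\alpha/2$ in absolute value with probability at most $2e^{-m\alpha^2/8}$. Taking a union bound over the induced subsets gives a total failure probability of at most
\[
4(2em/d)^d e^{-m\alpha^2/8}.
\]

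\textbf{Step 3 (solving for $m$).} It remains to choose $m$ so that the bound from Step 2 is at most $\beta$. This requires
\[
m\alpha^2/8\ \ge\ d\log(2em/d)+\log(4/\beta).
\]
A standard calculation shows that $m=C\bigl(d\log(d/\alpha)+\log(1/\beta)\bigr)/\alpha^2$ suffices for a large enough constant $C$. The key point is that $\log(m/d)=O(\log(d/\alpha)+\log\log(1/\beta))$, and the $\log\log(1/\beta)$ term is absorbed into the others. The main obstacle is this self-referential inequality, where $m$ appears on both sides, together with the reduction from sampling with replacement to sampling without replacement. For the latter we would first try Hoeffding's theorem, which states that convex functionals of without-replacement sums are dominated by the corresponding with-replacement sums. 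This theorem applies to the moment generating function used in the Step 2 tail bound. An alternative is to note that repeated draws only shrink the effective sample, and a sample of size $m$ with replacement contains about $m$ distinct points when $m\ll|S|$. We would handle the case $m$ close to $|S|$ trivially, since taking $S'=S$ gives an exact approximation.
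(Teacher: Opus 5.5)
The paper gives no proof of this statement; it is quoted from Vapnik--Chervonenkis and Haussler. Your Steps 1--3 are the classical argument for an i.i.d.\ sample from the uniform distribution on $S$, and they check out: Chebyshev needs $m\ge 2/\alpha^2$, each trace fails the swap test with probability at most $2e^{-m\alpha^2/8}$, and the $d\log\log(1/\beta)$ term is indeed absorbed.

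The gap is the passage to a \emph{random subset}. That is what the statement asserts, and it is what the paper needs, since the blocks $S_i$ come from a random partition of $S$. Hoeffding's comparison theorem is about $\mathbb{E}f\bigl(\sum_i c_{I_i}\bigr)$ for one convex $f$ of one linear statistic. It only tells you that each fixed $\hat p_r$ is at least as concentrated without replacement. Your event, however, is a union over $r\in R$. Without the ghost-sample conditioning, the only finite family you can union-bound over is the traces of $R$ on $S$ itself, up to $(e|S|/d)^d$ of them. That puts $d\log|S|$ into the sample size and destroys the $|S|$-free bound.

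Your remark that the theorem ``applies to the moment generating function used in the Step 2 tail bound'' does not repair this. That MGF is of the Rademacher sum $\sum_i\sigma_i(\mathbf 1_r(x_i)-\mathbf 1_r(x''_i))$ conditioned on the $2m$ points. Its randomness is the swap signs, not the draw from $S$, so Hoeffding's comparison does not apply to it.

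The fallback does not close the gap either. The fraction of repeated draws in an i.i.d.\ sample is about $m/(2|S|)$, which is below $\alpha$ only when $m\lesssim\alpha|S|$. The range $\alpha|S|\lesssim m<|S|$ is not covered by ``$S'=S$''. Moreover, the set of distinct draws has random size, so it is not a uniform subset of size exactly $m$.

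A clean fix keeps your outline. Hoeffding's proof actually yields a vector statement. Write $e_x\in\mathbb{R}^S$ for the indicator vector of $x$. The with-replacement count vector $\sum_i e_{X_i}$ has the law of $\sum_j w_je_{Y_j}$, where $(Y_j)$ is a without-replacement sample and the weights $w\ge 0$ with $\sum_j w_j=m$ are independent of $(Y_j)$. Exchangeability of $(Y_j)$ plus Jensen over permutations of $w$ then give $\mathbb{E}F(\sum_j e_{Y_j})\le\mathbb{E}F(\sum_i e_{X_i})$ for every convex $F$ on $\mathbb{R}^S$.

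Take $F=\exp(\lambda Z)$ with $Z=\sup_{r\in R}|\hat p_r-p_r|$, which is convex in the count vector. You then need a bound on $\mathbb{E}e^{\lambda Z}$ in the i.i.d.\ case, which the probability form of your Steps 1--2 does not directly give. Use the expectation form of symmetrization instead. With an independent i.i.d.\ ghost sample $x''$, Jensen conditional on $S'$ and the swap symmetry give
\[
\mathbb{E}e^{\lambda Z}\le\mathbb{E}\exp\Bigl(\lambda\sup_{r}\Bigl|\tfrac1m\textstyle\sum_i\sigma_i\bigl(\mathbf 1_r(x_i)-\mathbf 1_r(x''_i)\bigr)\Bigr|\Bigr)\le 2\Bigl(\tfrac{2em}{d}\Bigr)^{d}e^{\lambda^2/(2m)}.
\]
Setting $\lambda=m\alpha$ in Markov's inequality gives $\Pr[Z>\alpha]\le 2(2em/d)^de^{-m\alpha^2/2}$ for random subsets as well, and your Step 3 then goes through unchanged.

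Alternatively, you can symmetrize directly by drawing the ghost sample uniformly from $S\setminus S'$. Swaps then preserve the law, and since $|S\cap r|$ is fixed, the ghost frequency is pushed away from $\hat p_r$, so Step 1 survives. This needs $2m\le|S|$; larger $m$ can be handled by sub-sampling $S'$.
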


\begin{definition}[Generalization and empirical error]
    Let $\mathcal{D}$ be a distribution, $c$ be a concept and $h$ be a hypothesis. The error of $h$ w.r.t.~$c$ over $\mathcal{D}$ is defined as 
    $$
    \mathrm{err}_{\mathcal{D}}(c,h)=\Pr_{x\sim\mathcal{D}}[c(x)\neq h(x)].
    $$
    For a finite dataset $S$, the error of $h$ w.r.t.~$c$  over $S$ is defined as 
    $$
    \mathrm{err}_S(c,h)=\frac{\left|\{x\in S \mid c(x)\neq h(x)\}\right|}{|S|}.
    $$

    For convenience, we abbreviate $\mathrm{err}_{\mathcal{D}}(c,h)$ as $\mathrm{err}_{\mathcal{D}}(h)$ and abbreviate $\mathrm{err}_{S}(c,h)$ as $\mathrm{err}_{S}(h)$. We say a hypothesis $h$ is \textbf{$\alpha$-good} with respect to a distribution $\mathcal{D}$ (dataset $S$) if $\mathrm{err}_{\mathcal{D}}(c,h)\leq \alpha$ ($\mathrm{err}_{S}(c,h)\leq \alpha$, respectively).
\end{definition}

\begin{theorem}[\cite{BlumerEhHaWa89,kaplan2020private}]\label{thm:learn vc}
    Let $\mathcal{C}$ be a concept class and let $\mathcal{D}$ be a distribution. Let $\alpha,\beta>0$, and $m\geq \frac{48}{\alpha}\left(10VC(\mathcal{C})\log(\frac{48e}{\alpha})+\log(\frac{5}{\beta})\right)$. Let $S$ be a sample of $m$ points drawn i.i.d.\ from $\mathcal{D}$. 
    Then
    $$\Pr[\exists c,h\in\mathcal{C}~\mbox{s.t.}~\mathrm{err}_{S}(c,h)\leq\alpha/10 ~\mbox{and}~\mathrm{err}_{\mathcal{D}}(c,h)\geq\alpha]\leq \beta.$$
\end{theorem}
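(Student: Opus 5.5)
The statement to prove is Theorem~\ref{thm:learn vc}: with $m$ i.i.d.\ samples, it is unlikely that some pair $c,h\in\mathcal{C}$ has small empirical disagreement on $S$ but large disagreement under $\mathcal{D}$. I would follow the classical double-sample (symmetrization) argument of Blumer et al. The key reduction is to the class of disagreement regions
\[
\Delta=\{\{x: c(x)\neq h(x)\} : c,h\in\mathcal{C}\}.
\]
The bad event is that some $r\in\Delta$ has $\mathcal{D}(r)\geq\alpha$ but empirical frequency at most $\alpha/10$ in $S$. The first step is therefore to bound the VC dimension of $\Delta$. By Sauer's lemma, on any $n$ points $\Delta$ induces at most $\left(\Pi_{\mathcal{C}}(n)\right)^2\le (en/d)^{2d}$ patterns, where $d=VC(\mathcal{C})$. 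Hence the VC dimension of $\Delta$ is $O(d)$, say at most $10d$, which matches the constant $10$ in the statement.

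\textbf{Symmetrization.} Draw a ghost sample $S'$ of $m$ further i.i.d.\ points. Fix a bad $r$ with $\mathcal{D}(r)\ge\alpha$. Since $m\ge 48/\alpha$, a Chernoff bound gives that $S'$ contains at least $\alpha m/2$ points of $r$ with probability at least $1/2$. Therefore
\[
\Pr[\exists\text{ bad } r \text{ for } S]\le 2\Pr[\exists r\in\Delta: |S\cap r|\le \alpha m/10,\ |S'\cap r|\ge \alpha m/2].
\]

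\textbf{Swapping argument.} Condition on the multiset $S\cup S'$ of $2m$ points. On it, $\Delta$ induces at most $(2em/d)^{20d}$ distinct subsets. Fix one such subset with $k\ge\alpha m/2$ points among the $2m$. Under a random split into two halves of size $m$, the probability that at most $\alpha m/10$ of these points land in the first half is at most $2^{-c\alpha m}$ for a constant $c$. This follows from a hypergeometric tail bound, or, as in the original proof, from a random transposition of paired coordinates: each of the $k$ points must mostly fall on one side. A union bound then gives the overall bound $2(2em/d)^{20d}2^{-c\alpha m}$.

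\textbf{Main obstacle.} The difficult step is the arithmetic: verifying that $m\ge\frac{48}{\alpha}\left(10d\log\frac{48e}{\alpha}+\log\frac5\beta\right)$ makes this expression at most $\beta$. The constants $48$, $10$, $5$ must be tracked through the Chernoff step and the hypergeometric bound. I would use the standard inequality that $m\ge \frac{a}{\alpha}\log\frac{b}{\alpha}$ implies $m\ge \frac{a'}{\alpha}\log m$ in order to absorb the $\log m$ term coming from the growth function. Since this theorem is a cited result from \cite{BlumerEhHaWa89,kaplan2020private}, any mismatch in the constants can be settled by deferring to those references; the structure of the argument is as above.
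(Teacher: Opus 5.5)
The paper gives no proof of this statement; it is imported directly from the cited references. Your plan is the classical double-sample proof of Blumer et al.\ that underlies that citation, and it is correct: pass to the disagreement class $\Delta$, whose VC dimension is below $10d$ because $2^{10d}>(10e)^{2d}$; symmetrize with a ghost sample; then union-bound over the patterns of $\Delta$ on $2m$ points using a swapping/hypergeometric tail bound.

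One slip is worth fixing, because it bears on the constant check you deferred. The number of patterns on $2m$ points is at most $\Pi_{\mathcal{C}}(2m)^2\le(2em/d)^{2d}$, or $(2em/(10d))^{10d}$ via Sauer's lemma applied to $\Delta$, not $(2em/d)^{20d}$. Your exponent is a valid upper bound but needlessly loose, and with it the arithmetic does not close under a crude tail bound. With the correct count, the multiplicative Chernoff tail $e^{-9k/100}\le e^{-0.045\alpha m}$ (for $k\ge\alpha m/2$ points and threshold $k/5$) already gives $2(2em/d)^{2d}e^{-0.045\alpha m}\le\beta$ under the stated lower bound on $m$.
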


\subsection{Differential Privacy}

\begin{definition}[Differential Privacy~\cite{DMNS06}]
    Let $\calX$ be a data domain and $\calY$ be an output domain. A (randomized) mechanism $M$ mapping $\calX^n$ to $\calY$ is $(\varepsilon,\delta)$-differentially private if for any pair of inputs $S,S'\in\calX^n$  where $S$ and $S'$ differ on a single entry, and any event $E\subseteq \calY$, it holds that
    $$
    \Pr[M(S)\in E]\leq e^{\varepsilon}\cdot\Pr[M(S')\in E] +\delta,
    $$
    where the probability is over the randomness of $M$.
\end{definition}

\begin{theorem}[Advanced composition~\cite{DRV10}]\label{thm:advancedcomposition}
    Let $M_1,\dots,M_k:\calX^n\rightarrow \calY$ be $(\varepsilon,\delta)$-differentially private mechanisms. Then the algorithm that on input $S\in \calX^n$ outputs $(M_1(S),\dots,M_k(S))$ is $(\varepsilon',k\delta+\delta')$-differentially private, where $\varepsilon'=\sqrt{2k\ln(1/\delta')}\cdot \varepsilon + k\varepsilon\frac{e^\varepsilon-1}{e^\varepsilon+1}$ for every $\delta'>0$.

    Specifically, when $\varepsilon<\frac{1}{\sqrt{k}}$, we have $\varepsilon'=\theta(\sqrt{k\ln(1/\delta')}\cdot \varepsilon)$.
\end{theorem}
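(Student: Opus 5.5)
The plan is to follow the standard privacy-loss-random-variable argument of Dwork, Rothblum and Vadhan, with the tighter per-step expectation bound $\varepsilon\tanh(\varepsilon/2)=\varepsilon\frac{e^\varepsilon-1}{e^\varepsilon+1}$. Fix neighboring $S,S'$. The first step removes the $\delta$'s. For an $(\varepsilon,\delta)$-differentially private mechanism, the output distributions $P_i=M_i(S)$ and $Q_i=M_i(S')$ admit a decomposition $P_i=(1-\delta)P_i'+\delta P_i''$ and $Q_i=(1-\delta)Q_i'+\delta Q_i''$ with $P_i',Q_i'$ purely $\varepsilon$-indistinguishable (the DRV "approximate-to-pure" lemma). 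Couple the runs so that, except with probability at most $k\delta$ by a union bound over the $k$ mechanisms, every output is drawn from the pure parts. This accounts for the $k\delta$ term, and we may henceforth assume each pair $(P_i,Q_i)$ satisfies $e^{-\varepsilon}\le P_i(y)/Q_i(y)\le e^{\varepsilon}$. The same argument applies to adaptively chosen $M_i$, conditioning on the prefix of outputs.

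Next define the privacy loss $L_i=\ln\frac{P_i(y_i)}{Q_i(y_i)}$ for $y_i\sim P_i$, and the total loss $L=\sum_i L_i$. Two facts are needed about $L_i$, conditioned on the past:
\[
|L_i|\le\varepsilon,\qquad \mathbb{E}[L_i]=D_{KL}(P_i\|Q_i)\le \varepsilon\,\tfrac{e^\varepsilon-1}{e^\varepsilon+1}.
\]
The second fact is the key computation. The worst case over $\varepsilon$-indistinguishable pairs is a two-point (randomized-response) pair, obtained by noting that KL divergence is maximized at extreme points of the likelihood-ratio constraint set. For $P=(\frac{e^\varepsilon}{1+e^\varepsilon},\frac{1}{1+e^\varepsilon})$ and $Q$ its reverse, one directly gets $\varepsilon\tanh(\varepsilon/2)$.

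Then $L-\sum_i\mathbb{E}[L_i\mid\text{past}]$ is a martingale with increments bounded in a range of width $2\varepsilon$. By Azuma--Hoeffding,
\[
\Pr\!\left[L> k\varepsilon\tfrac{e^\varepsilon-1}{e^\varepsilon+1}+\varepsilon\sqrt{2k\ln(1/\delta')}\right]\le\delta'.
\]
The standard conversion then finishes the main claim: split any event $E$ into outcomes where the loss is at most $\varepsilon'$ (which contribute at most $e^{\varepsilon'}\Pr[M(S')\in E]$) and the bad-loss set (probability at most $\delta'$). This gives $(\varepsilon',k\delta+\delta')$-privacy.

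For the "specifically" clause, use $\tanh(\varepsilon/2)\le\varepsilon/2$. This gives $k\varepsilon\tanh(\varepsilon/2)\le k\varepsilon^2/2$. When $\varepsilon<1/\sqrt k$, this is at most $\varepsilon\sqrt{k}/2$, which is $O(\varepsilon\sqrt{k\ln(1/\delta')})$ since $\ln(1/\delta')=\Omega(1)$ for small $\delta'$. The lower bound on $\varepsilon'$ is immediate from the first term, so $\varepsilon'=\Theta(\sqrt{k\ln(1/\delta')}\,\varepsilon)$.

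I expect the main obstacle to be the careful handling of the $\delta$ decomposition under adaptive composition, namely ensuring the coupling is valid conditionally on the history. The sharp KL bound $\varepsilon\tanh(\varepsilon/2)$ rather than the cruder $\varepsilon(e^\varepsilon-1)$ also needs care. For the former I would invoke the DRV lemma prefix by prefix. For the latter I would reduce to the binary case via the data-processing/extreme-point argument.
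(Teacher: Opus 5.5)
Your proposal is correct and is essentially the Dwork--Rothblum--Vadhan privacy-loss martingale argument that the paper invokes only by citation. It adds exactly the refinements the stated constants need: the sharp per-step bound $D_{KL}(P_i\|Q_i)\le\varepsilon\frac{e^\varepsilon-1}{e^\varepsilon+1}$ from the randomized-response extremal pair (the original DRV bound has $k\varepsilon(e^\varepsilon-1)$ instead), Hoeffding with range width $2\varepsilon$, and the mixture decomposition (valid, e.g., by the Kairouz--Oh--Viswanath characterization), which gives $\prod_i Q_i\ge(1-\delta)^k\prod_i Q_i'$, so the $\delta$-cost is paid on one side only and totals $k\delta$; your remark that the $\Theta(\cdot)$ clause implicitly needs $\delta'$ bounded away from $1$ is also right.
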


\subsubsection{Algorithm \texttt{BetweenThresholds}~\cite{BunSU16}}

\begin{algorithm}
\caption{\bf \texttt{BetweenThresholds}~\cite{BunSU16}}\label{alg:BetweenThresholds}
{\bf Input:} Database $S\in X^n$.\\
{\bf Parameters:} $\varepsilon,t_{\ell},t_{u} \in (0,1)$ and $n, T \in \mathbb{N}$.
\begin{enumerate}

\item Sample $\mu \sim Lap(2/\varepsilon n)$ and initialize noisy thresholds $\hat{t}_{\ell} = t_{\ell} + \mu$ and $\hat{t}_u = t_u - \mu$.

\item For $j = 1, 2, \cdots, T$:

    \begin{enumerate}

	\item Receive query $q_j : X^n \to [0,1]$.
	\item Set $c_j = q_j(S) + \nu_j$ where $\nu_j \sim Lap(6/\varepsilon n)$.
	\item If $c_j < \hat{t}_{\ell}$, output $L$ and continue.
	\item If $c_j > \hat{t}_{u}$, output $R$ and continue.
	\item If $c_j \in [\hat{t}_{\ell},\hat{t}_u]$, output $\top$ and halt.

    \end{enumerate}

\end{enumerate}
\end{algorithm}

\begin{lemma}[Privacy for \texttt{BetweenThresholds}~\cite{BunSU16}] \label{lem:bt-privacy}
Let $\varepsilon,\delta \in (0,1)$ and $n \in \mathbb{N}$. 
Then algorithm \texttt{BetweenThresholds} is $(\varepsilon, \delta)$-differentially private for any adaptively-chosen sequence of queries as long as the gap between the thresholds $t_{\ell}, t_u$ satisfies
\[t_u - t_{\ell} \geq \frac{12}{\varepsilon n}\left( \log (10/\varepsilon) + \log(1/\delta) + 1\right).\]
\end{lemma}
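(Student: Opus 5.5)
The plan is to view \texttt{BetweenThresholds} as a sparse-vector style mechanism. I will reduce its privacy to two pieces: a pure-DP argument for the threshold noise $\mu$ combined with the halting query, and a bound on the probability that some query $q_j$ with $q_j(S)$ near a threshold makes the output behave differently. Throughout, $q_j$ is assumed to have sensitivity $1/n$, as for the counting queries used in the paper. Fix neighbouring $S,S'$ and an adaptive adversary. The transcript is a sequence in $\{L,R\}^{k-1}\top$ (or $\{L,R\}^T$), and each query $q_j$ is a deterministic function of the earlier transcript. It therefore suffices to compare the probabilities of each fixed transcript under $S$ and $S'$, up to a $\delta$-probability bad event.

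\textbf{Step 1 (the halting step, like AboveThreshold).} Condition on the transcript prefix. The event ``answer $L$ or $R$ at step $j$'' means $c_j\notin[\hat t_\ell,\hat t_u]$, and the halting event means $c_j\in[\hat t_\ell,\hat t_u]$. I will couple the noise between $S$ and $S'$ as in the standard sparse vector proof. Shift $\mu$ by at most $1/n$, costing $e^{\varepsilon/2}$ because $\mu\sim Lap(2/\varepsilon n)$. Shift the halting query's noise $\nu_k$ by at most $2/n$, costing $e^{\varepsilon/3}$ because $\nu_k\sim Lap(6/\varepsilon n)$. Under this coupling the halting interval membership at step $k$ is preserved.

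The shift of $\mu$ widens one threshold and narrows the other, so it interacts with the non-halting answers in a two-sided way. Under $S'$ the widened interval could capture a $c_j$ that was outside it under $S$. This is exactly where the gap condition enters.

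\textbf{Step 2 (the two-sidedness and the role of the gap).} In a plain AboveThreshold the shift of $\mu$ is monotone, so it keeps all ``below'' answers below. Here a $\mu$-shift that keeps $c_j<\hat t_\ell$ may push $c_j$ past $\hat t_u$ for some other $j$. To handle this, I will separate the $L$ answers from the $R$ answers.

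The key point is that the interval $[\hat t_\ell,\hat t_u]$ has width about $t_u-t_\ell$. Any $L$ answer must have $c_j<\hat t_\ell$, and any $R$ answer must have $c_j>\hat t_u$. So for a given transcript, shifting $\mu$ in the direction that favours the $L$ answers can only harm an $R$ answer $j$ whose $c_j$ lies within $1/n$ of $\hat t_u$. I will define the bad event $B$ as follows: some step $j<k$ has $q_j(S)+\nu_j$ within $O(1/n)$ of a threshold while the true value $q_j(S)$ is far from it.

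By the Laplace tails, together with a union bound over the at most $1/\varepsilon$-scale ``effective'' number of near-threshold steps, $\Pr[B]\le\delta$ once $t_u-t_\ell\ge\frac{12}{\varepsilon n}(\log(10/\varepsilon)+\log(1/\delta)+1)$. The argument uses that a query answered $L$ but close to $\hat t_u$ would require $\nu_j$ of magnitude about $(t_u-t_\ell)/2$, which happens with probability $e^{-\varepsilon n(t_u-t_\ell)/12}$.

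\textbf{Step 3 (main obstacle).} The hard part is that $T$ is unbounded, so a naive union bound over all $j$ fails. I will handle this with a geometric series over the queries near a threshold. Each query whose value is within the noise scale of a threshold halts with constant probability, roughly $\varepsilon$-dependent, so the expected number of such queries before halting is $O(1/\varepsilon)$. This is where the $\log(10/\varepsilon)$ term comes from.

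\textbf{Combining.} Outside $B$, the coupling gives the ratio $e^{\varepsilon}$, summing $\varepsilon/2+\varepsilon/3$ with slack. Adding $\Pr[B]\le\delta$ yields $(\varepsilon,\delta)$-DP for every adaptive query sequence.
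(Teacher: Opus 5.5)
\textbf{There is a genuine gap, and it starts with a misreading of the algorithm.} The noisy thresholds are $\hat t_\ell=t_\ell+\mu$ and $\hat t_u=t_u-\mu$. Changing $\mu$ therefore moves both endpoints of $[\hat t_\ell,\hat t_u]$ symmetrically inward or outward; it never widens one side while narrowing the other. So the coupling $\mu\mapsto\mu+1/n$, with every $\nu_j$ unchanged (density cost $e^{\varepsilon/2}$), preserves all $L$ and all $R$ answers simultaneously:
\begin{itemize}
\item $q_j(S)+\nu_j<t_\ell+\mu$ implies $q_j(S')+\nu_j<t_\ell+\mu+1/n$;
\item $q_j(S)+\nu_j>t_u-\mu$ implies $q_j(S')+\nu_j>t_u-\mu-1/n$;
\item the adaptively chosen queries agree because the transcript prefix does.
\end{itemize}
This is the purpose of the $\pm\mu$ design. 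It handles arbitrarily many non-halting rounds with no failure probability at all. The two-sidedness obstacle of your Step 2 therefore does not exist.

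Your remedy for it is also unsound, because the event $B$ does not have probability at most $\delta$. Suppose every query has true value a few noise scales below $t_\ell$. Then each round is roughly $\varepsilon/6$ times as likely to put $c_j$ within $1/n$ below $\hat t_\ell$ as to halt. Over a long stream, $B$ happens before halting with probability $\Theta(\varepsilon)$, far above $\delta$. No count of ``effective'' near-threshold steps in Step 3 changes this.

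\textbf{The real difficulty is the claim you make without proof in Step 1}, namely that membership of $c_k$ in the halting interval survives the coupling. In general it cannot. Once $\mu$ moves by $1/n$, the interval's length changes by $2/n$, and no translation of $\nu_k$ maps an interval onto one of a different length. Widening the interval makes $\top$ free, but it breaks the $L$/$R$ answers through near-threshold events like those in the example above. Narrowing is the right choice.

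Under the narrowing coupling, the halting event for $S'$ is only guaranteed to contain $I^-$. Here $I=[t_\ell+\mu-q_k(S),\,t_u-\mu-q_k(S)]$ is the halting interval for $\nu_k$, and $I^-$ is $I$ with $2/n$ removed from each end. To get $\Pr[\nu_k\in I]\le e^{\varepsilon/2}\Pr[\nu_k\in I^-]$:
\begin{itemize}
\item translate by $2/n$ (cost $e^{\varepsilon/3}$), choosing the direction so that the uncovered strip of width $4/n$ lies at the lower-density end of $I$;
\item then show this strip carries only a small fraction of $\Pr[\nu_k\in I^-]$.
\end{itemize}
The second step is guaranteed once the width $t_u-t_\ell-2\mu$ is large compared with the scale $6/(\varepsilon n)$ of $\nu_k$. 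The gap condition, with its $\log(10/\varepsilon)$ term, ensures this unless $\mu$ is atypically large. The $\delta$ is precisely the probability of that single event about $\mu$, not an event about the $\nu_j$ with $j<k$. That is why the bound does not depend on $T$.

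The paper does not reprove this lemma; it imports it from \cite{BunSU16}. The argument sketched above is what your proposal is missing.
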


\begin{lemma}[Accuracy for \texttt{BetweenThresholds}~\cite{BunSU16}] \label{lem:bt-accuracy}
Let $\alpha, \beta,\varepsilon,t_{\ell},t_u \in (0,1)$ and $n,T \in \mathbb{N}$ satisfy \[n \geq \frac{8}{\alpha \varepsilon}\left(\log(T+1) + \log(1/\beta)\right).\] Then, for any input $x \in {X}^n$ and any adaptively-chosen sequence of queries $q_1, q_2, \cdots, q_T$, the answers $a_1, a_2, \cdots a_{\leq T}$ produced by \texttt{BetweenThresholds} on input $x$ satisfy the following with probability at least $1-\beta$. For any $j \in [T]$ such that $a_j$ is returned before \texttt{BetweenThresholds} halts,
\begin{itemize}
\item $a_j = L \implies q_j(x) \le t_{\ell} + \alpha$,
\item $a_j = R \implies q_j(x) \ge t_u - \alpha$, and
\item $a_j = \top \implies t_{\ell} - \alpha \le q_j(x) \le t_u + \alpha$.
\end{itemize}
\end{lemma}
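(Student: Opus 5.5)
The plan is to put all the randomness into a single good event on which the noise is uniformly small, and then derive the three implications deterministically from the comparisons the algorithm makes. The only random quantities are the threshold noise $\mu\sim Lap(2/\varepsilon n)$ and the query noises $\nu_j\sim Lap(6/\varepsilon n)$ for the rounds actually executed (at most $T$ of them). Adaptivity does not interfere: each $\nu_j$ is fresh and independent of everything before round $j$, so its marginal tail bound holds whatever $q_j$ is, and a union bound over the at most $T+1$ noise variables is legitimate. I would split the error budget $\alpha$ as $\alpha/4$ for $\mu$ and $3\alpha/4$ for each $\nu_j$, and define the good event
\[
G=\{|\mu|\le \alpha/4\}\cap\{|\nu_j|\le 3\alpha/4 \text{ for all executed } j\}.
\]

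\textbf{Step 1 (bounding $\Pr[\neg G]$).} For $Z\sim Lap(b)$ we have $\Pr[|Z|>t]=e^{-t/b}$. With $b=2/\varepsilon n$ and $t=\alpha/4$ this gives $\Pr[|\mu|>\alpha/4]=e^{-\alpha\varepsilon n/8}$. With $b=6/\varepsilon n$ and $t=3\alpha/4$ it gives $\Pr[|\nu_j|>3\alpha/4]=e^{-\alpha\varepsilon n/8}$ as well, which is exactly why this split was chosen. The hypothesis $n\ge\frac{8}{\alpha\varepsilon}(\log(T+1)+\log(1/\beta))$ makes each of these probabilities at most $\beta/(T+1)$. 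A union bound over the at most $T+1$ variables then yields $\Pr[G]\ge 1-\beta$.

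\textbf{Step 2 (the implications on $G$).} Here I only unpack the algorithm's comparisons, using $|\mu|+|\nu_j|\le\alpha$.
\begin{itemize}
\item If $a_j=L$, then $q_j(x)+\nu_j<t_\ell+\mu$, so $q_j(x)<t_\ell+\mu-\nu_j\le t_\ell+\alpha$.
\item If $a_j=R$, then $q_j(x)+\nu_j>t_u-\mu$, so $q_j(x)>t_u-\mu-\nu_j\ge t_u-\alpha$.
\item If $a_j=\top$, then $t_\ell+\mu\le q_j(x)+\nu_j\le t_u-\mu$. This gives both $q_j(x)\ge t_\ell-\alpha$ and $q_j(x)\le t_u+\alpha$.
\end{itemize}
Since these implications hold simultaneously for every returned answer whenever $G$ occurs, the lemma follows.

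The only delicate point is the constant bookkeeping in Step 1. An even split of $\alpha$ between $\mu$ and $\nu_j$ would give a weaker exponent for the $\nu_j$ tail than the stated bound on $n$ allows. The fix is the asymmetric split $(\alpha/4,\,3\alpha/4)$, which equalizes the two exponents at $\alpha\varepsilon n/8$, exactly the rate the hypothesis on $n$ supplies. The argument also does not need the gap condition $t_u-t_\ell$; that condition matters only for privacy.
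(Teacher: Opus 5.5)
Your proof is correct and is essentially the standard argument from \cite{BunSU16}, which the paper cites without reproducing a proof. That argument likewise uses Laplace tails to bound $|\mu|\le\alpha/4$ and each $|\nu_j|\le 3\alpha/4$ (both failing with probability $e^{-\alpha\varepsilon n/8}$), takes a union bound over the $T+1$ noise variables (legitimate under adaptivity because the events involve only the fresh noise), and then unpacks the comparisons $c_j<\hat{t}_\ell$, $c_j>\hat{t}_u$, $c_j\in[\hat{t}_\ell,\hat{t}_u]$ deterministically.
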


\subsection{A Generic Framework}\label{sec:generic framework}
In this section, we present a generic prediction framework, building on the approaches of Dwork and Feldman~\cite{DworkF18} and Naor et al.~\cite{NNSY23}. The core idea is to randomly partition the labeled dataset into several disjoint subsets, learn a hypothesis non-privately from each subset, and then answer each query by aggregating the resulting hypotheses. Concretely, given a query point, we predict its label using a noisy majority vote over the ensemble.

As observed by Naor et al.~\cite{NNSY23}, the aggregation step can be implemented using the sparse vector technique, which allows us to answer many queries while spending privacy budget only on a limited number of 'hard' rounds. Consequently, both the privacy and accuracy guarantees of the framework are governed by the number of times the sparse vector subroutine triggers i.e., its stopping behavior.

\begin{algorithm}~\label{alg:generic predictor}
    \caption{GenericPredictor}

    \textbf{Inputs:} A labeled database $S\in (X\times\{\pm1\})^N$ where we set $k=\frac{64}{ \varepsilon}\left(\log(T+1) + \log(1/\beta)\right)$ and $N=k\cdot m$

    \begin{enumerate}
        \item Randomly partition $S$ to $S_1,\dots,S_k$, each of size $|S_i|=m$.

        \item For $j=1,\dots ,T$:

        \begin{enumerate}
            \item For $i=1,\dots ,k$, compute a hypothesis $G(S_i,transcript_j)=f_{i,j}\in\mathcal{C}$.
            Let $F_j=\{f_{1,j},\dots,f_{k,j}\}$
            
            /* $transcript_j$ includes all prediction queries and predicted labels before step $j$ 
            
            We view $F_j$ as a dataset with size $k$, change one example of $S$ will change one entry of $F_j$, so the query sensitivity is $1/k$
            */

            \item Receive a prediction query point $x_j\in X$.

            \item\label{step:generic betweenthresholds} Set parameter $t_\ell=1/2-1/8$ and $t_u=1/2+1/8$. Define the counting query $q_{x_j}(F_j)=\frac{1}{2}\left(1+\frac{1}{k}\sum_{i=1}^k f_{i,j}(x_j)\right)$, and run \texttt{BetweenThresholds} on $q_{x_j}$ to obtain an output $y_j\in\{L,\top,R\}$.

            \item If $y_j=L$, respond with the label $\mathsf{Label}_j=-1$.
            \item If $y_j=R$, respond with the label $\mathsf{Label}_j=1$.
            \item If $y_j=\top$, uniformly randomly respond with the label $\mathsf{Label}_j\in\{\pm1\}$.

            /* When \texttt{BetweenThresholds} outputs $\top$, we reinitialize it with fresh noise and continue.*/
        \end{enumerate}
    \end{enumerate}

\end{algorithm}

The following observation shows that both privacy and accuracy depend only on the number of times \texttt{BetweenThresholds} outputs $\top$. We defer the proof to the Appendix~\ref{appendix:generic predictor}.

\begin{observation}[\cite{NNSY23}]
    Let $v$ denote the number of times that \texttt{BetweenThresholds} in Step~\ref{step:generic betweenthresholds} outputs $\top$. Then Algorithm~\ref{alg:generic predictor} is equivalent to $v$ composition of \texttt{BetweenThresholds}.
\end{observation}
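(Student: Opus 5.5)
We prove the observation by writing the run of Algorithm~\ref{alg:generic predictor} as an adaptive sequence of \texttt{BetweenThresholds} instances and checking that nothing outside them touches $S$. The argument simulates the whole interaction with access to $S$ only through these instances. Step 1 (the random partition) and the computation of $f_{i,j}=G(S_i,transcript_j)$ can be viewed as a fixed mechanism that maps $S$ to the ensemble $F_j$. We treat $F_j$ as the private database of size $k$. Changing one example of $S$ changes exactly one block $S_i$, hence at most one hypothesis $f_{i,j}$ for every $j$. This is the point that needs care: the same index $i$ changes across all rounds, so the neighboring relation is preserved uniformly over time. Consequently each counting query $q_{x_j}$ has sensitivity $1/k$ in the sense required by Lemma~\ref{lem:bt-privacy}.

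Next, we split the run into epochs. An epoch begins when \texttt{BetweenThresholds} is (re)initialized with fresh threshold noise $\mu$, and it ends right after the output $\top$, or at round $T$. Within one epoch the algorithm is exactly an instance of \texttt{BetweenThresholds}, run on the adaptively chosen queries $q_{x_j}(F_j)$. These queries are adaptive because $x_j$ and $transcript_j$ depend only on earlier outputs. We also regard the dependence of $F_j$ on $j$ as part of the query: define $q_j(S):=q_{x_j}(G(S_1,transcript_j),\dots,G(S_k,transcript_j))$. Given the public transcript, this is a function of the partitioned dataset with sensitivity $1/k$. The labels emitted in the epoch are post-processing of the outputs $L/R/\top$ together with independent coin flips on $\top$. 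Therefore the transcript of one epoch is post-processing of a single \texttt{BetweenThresholds} run.

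Finally, with $v$ outputs of $\top$ there are at most $v+1$ epochs. The last epoch does not halt, and it can be treated as one more (non-halting) instance, which Lemma~\ref{lem:bt-privacy} covers with the same guarantee. Each epoch is initialized with fresh noise, and its query sequence is chosen adaptively based on the public outputs of previous epochs. So the whole process is an adaptive composition of about $v$ instances of \texttt{BetweenThresholds}, and the privacy bound follows from Theorem~\ref{thm:advancedcomposition}, which applies to adaptive composition. For accuracy, the same decomposition lets us apply Lemma~\ref{lem:bt-accuracy} per epoch with a union bound over epochs.

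The main obstacle is the handling of the random partition, since it is not itself a \texttt{BetweenThresholds} step. We fix the partition's randomness and condition on it; neighboring $S,S'$ then yield ensembles differing in one fixed block. Because privacy holds for every fixed partition, it also holds after averaging over the partition.
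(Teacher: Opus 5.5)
Your proposal is correct and follows essentially the same route as the paper. The paper's justification is only that one changed example alters a single block and hence a single $f_{i,j}$ (sensitivity $1/k$), that each $\top$ restarts \texttt{BetweenThresholds} with fresh noise, and that the resulting instances compose; your recasting of the time-varying ensemble $F_j$ as adaptively chosen queries on the fixed block database $(S_1,\dots,S_k)$, your count of $v+1$ epochs, and your conditioning on the partition make explicit what the paper leaves implicit. One point that neither you nor the paper spells out is that the number of $\top$ outputs is data-dependent, so invoking Theorem~\ref{thm:advancedcomposition} needs $v$ to be a bound fixed in advance rather than the realized count, enforced either by halting after $v$ hard queries or by charging the probability that the bound fails to $\delta$.
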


\begin{claim}[Privacy of Generic Predictor~\cite{NNSY23}]\label{clm:generic privacy}
    If \texttt{BetweenThresholds} in Step~\ref{step:generic betweenthresholds} outputs $\top$ at most $v$ times, then Algorithm~\ref{alg:generic predictor} is $\left(O(\sqrt{v\cdot\ln(1/\delta)}\cdot\varepsilon),O(v\delta)\right)$-differentially private.
\end{claim}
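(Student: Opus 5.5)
The plan is to view the whole run of Algorithm~\ref{alg:generic predictor} as an adaptive composition of at most $v$ independent runs of \texttt{BetweenThresholds}. Each run starts from a fresh initialization and ends either when it outputs $\top$ or when the stream ends. We then apply Lemma~\ref{lem:bt-privacy} to each run and combine them with Theorem~\ref{thm:advancedcomposition}. Everything not involving the data is post-processing: the random label chosen after a $\top$, the mapping $L\mapsto -1$, $R\mapsto +1$, and the way the transcript determines later queries. It therefore does not affect privacy.

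\textbf{Step 1 (one segment is private).} First I fix how the dataset enters. The $k$ hypotheses $F_j$ are computed from the blocks $S_1,\dots,S_k$ and from $transcript_j$. Conditioned on the random partition, changing one example of $S$ changes only one block $S_i$, and hence only the $i$-th hypothesis $f_{i,j}$ for every $j$. So, as a function of the block-level dataset $(S_1,\dots,S_k)$, the counting query $q_{x_j}$ has sensitivity $1/k$. The query $q_{x_j}$ at round $j$ is chosen adaptively from the public transcript, which is exactly the adaptively chosen query setting of Lemma~\ref{lem:bt-privacy} with $n=k$. The threshold gap is $t_u-t_\ell=1/4$, so I check that $k=\frac{64}{\varepsilon}(\log(T+1)+\log(1/\beta))$ meets the requirement $1/4\ge \frac{12}{\varepsilon k}(\log(10/\varepsilon)+\log(1/\delta)+1)$. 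This holds for the intended parameter regime, and if needed I would absorb it by enlarging the constant in $k$ or assuming $\log(1/\delta)\lesssim \log(T/\beta)$. Each segment is then $(\varepsilon,\delta)$-DP. The delicate point is that the quantity $f_{i,j}$ depends on the transcript. It is still a function of $S_i$ alone once the transcript is fixed, and adaptively chosen queries cover exactly this.

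\textbf{Step 2 (composition).} This formalizes the Observation. After each $\top$, a fresh \texttt{BetweenThresholds} is started with new noise $\mu$, and its queries may depend on all earlier outputs. Under the hypothesis of the claim there are at most $v$ outputs of $\top$, so there are at most $v+1$ segments. This is the adaptive $(v+1)$-fold composition of $(\varepsilon,\delta)$-DP mechanisms, since advanced composition holds for adaptive composition. Theorem~\ref{thm:advancedcomposition} with $\delta'=\delta$ gives $\varepsilon'=\sqrt{2(v+1)\ln(1/\delta)}\,\varepsilon+(v+1)\varepsilon\frac{e^\varepsilon-1}{e^\varepsilon+1}$ and total $\delta$ of $(v+2)\delta=O(v\delta)$. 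When $\varepsilon\lesssim 1/\sqrt{v}$, the second term is $O(v\varepsilon^2)=O(\sqrt{v}\varepsilon)$, which yields $O(\sqrt{v\ln(1/\delta)}\,\varepsilon)$. Finally, the random partition is independent of the data, and DP holds for each fixed partition and hence in mixture. So the bound holds for $S$ itself.

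\textbf{Main obstacle.} The subtle point is that $v$ is itself a random variable that depends on the data. I would handle it by having the composition run for exactly $v$ segments: once $v$ occurrences of $\top$ have happened, the mechanism halts or outputs a fixed symbol. Under the claim's hypothesis this truncated mechanism has the same output distribution as the real algorithm. Theorem~\ref{thm:advancedcomposition} then applies to a fixed number of adaptively chosen mechanisms.
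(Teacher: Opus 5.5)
Your proposal is correct and is the same argument as the paper's: changing one example of $S$ changes only one $f_{i,j}$ (sensitivity $1/k$), and advanced composition is applied over the $\top$-delimited runs of \texttt{BetweenThresholds}. Your treatment of the block-level database, the random partition, the threshold-gap condition and adaptive queries fills in details the paper leaves implicit, with one small slip: the truncation should allow $v+1$ instances (halting only at a $(v+1)$-th $\top$, which never occurs under the hypothesis) rather than halting after the $v$-th $\top$, since the real algorithm keeps answering with $L/R$ after its last $\top$ --- which matches your own count of $v+1$ segments in Step~2.
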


\begin{claim}[Accuracy of Generic Predictor~\cite{NNSY23}]\label{clm:generic accuracy}
    If all $f_{i,j}$ are $\alpha$-good with respect to the underlying distribution $\mathcal{D}$, then with probability $1-v\beta$, all predictors defined above are $4\alpha$-good with respect to the distribution $\mathcal{D}$.
\end{claim}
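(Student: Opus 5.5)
We prove Claim~\ref{clm:generic accuracy} by fixing a round $j$ and arguing about the hypothesis $h_j$ implicitly defined by the predictor at that round. Here $h_j(x)$ is the label that would be output if $x_j=x$. The first step is to condition on the good event of Lemma~\ref{lem:bt-accuracy} for each of the (at most $v$) instantiations of \texttt{BetweenThresholds}. The accuracy parameter is taken to be $1/8$, so that the sample-size requirement $k\ge \frac{64}{\varepsilon}(\log(T+1)+\log(1/\beta))$ is exactly what the lemma needs. A union bound over the $v$ instantiations gives probability at least $1-v\beta$. One subtlety: $h_j(x)$ is defined for every $x$, not only the realized query. I would handle this by noting that the event in Lemma~\ref{lem:bt-accuracy} concerns the noise draws. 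Concretely, the fixed threshold noise $\mu$ together with the per-query noise $\nu_j$ determines $h_j$ as a (randomized) function. The statement we need is then about $x\sim\mathcal{D}$ drawn independently, with $\nu_j$ fresh.

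On the good event, the deterministic core of the argument is as follows. If $h_j(x)=-1$ via output $L$, then $q_x(F_j)\le 1/2-1/8+1/8=1/2$ is not quite enough, so I would instead use the cleaner form: output $L$ implies $q_x(F_j)\le 3/4$, i.e.\ at most a $3/4$ fraction of the $f_{i,j}$ vote $+1$. Symmetrically, output $R$ implies at least a $1/4$ fraction vote $+1$. In every case, a wrong prediction ($h_j(x)\ne c(x)$), including the uniformly random label on $\top$, forces at least a $1/4$ fraction of the hypotheses $f_{i,j}$ to disagree with $c(x)$. The key inequality is then Markov on the average disagreement:
\[
\Pr_{x\sim\mathcal{D}}\Bigl[\tfrac1k\textstyle\sum_i \mathbf{1}[f_{i,j}(x)\ne c(x)]\ge \tfrac14\Bigr]\le 4\cdot \tfrac1k\sum_i \mathrm{err}_{\mathcal{D}}(f_{i,j})\le 4\alpha .
\]
This yields $\mathrm{err}_{\mathcal{D}}(c,h_j)\le 4\alpha$ simultaneously for all $j$.

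The main obstacle is making the first step rigorous. The Lemma~\ref{lem:bt-accuracy} guarantee is stated for the realized adaptive query sequence, whereas $\mathrm{err}_{\mathcal D}(h_j)$ quantifies over hypothetical queries. I would avoid needing the lemma for hypothetical queries at all. Instead, I would bound the error of the mechanism directly. For a fresh $x\sim\mathcal{D}$ and fresh Laplace $\nu$, a wrong label requires either $q_x(F_j)$ to be within the bad range described above, or the Laplace noise to exceed $1/8$ in magnitude. The noise condition is exactly the tail event controlled inside the proof of Lemma~\ref{lem:bt-accuracy}, via the $\mu$ part and the $\nu$ part. Its probability is absorbed into the $\beta$ slack for each of the $v$ restarts. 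The Markov step above then gives the $4\alpha$ bound.
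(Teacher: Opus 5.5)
Your argument is essentially the paper's: condition on the accuracy event of Lemma~\ref{lem:bt-accuracy} for each of the at most $v$ instantiations of \texttt{BetweenThresholds} (union bound, probability $1-v\beta$, with accuracy parameter $1/8$ matching $k$). Then check case by case ($L$, $R$, and the random label on $\top$) that a wrong label forces at least a $1/4$ fraction of the $f_{i,j}$ to err on $x$, and integrate the pointwise bound $\mathbf{1}[h_j(x)\neq c(x)]\le 4\cdot\frac1k\sum_i\mathbf{1}[f_{i,j}(x)\neq c(x)]$ over $x\sim\mathcal{D}$. Your remark that $q_x(F_j)\le 1/2$ is ``not quite enough'' on output $L$ is mistaken, since that bound already makes at least half the votes wrong; the weaker bound you use instead is implied, so nothing breaks.

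You are right to flag the step the paper glosses over, namely applying the lemma to a hypothetical $x$ rather than the realized query, but your fix is slightly off. Drawing the noise fresh for a hypothetical $x$ adds its tail probability to $\mathrm{err}_{\mathcal{D}}(h_j)$; it does not go into the failure probability, so you would get $4\alpha$ plus a small term rather than $4\alpha$. The clean route is to note that the lemma's good event is a query-independent bound on $|\mu|$ and on every $|\nu_j|$. Drawing $\nu_j$ at the start of round $j$ (before $x_j$ arrives) then makes the three implications hold for every $x\in X$ simultaneously.
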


\section{Private Prediction with Oblivious Adversary}
In this section, we construct a hypothesis generator for an oblivious adversary. The key idea is that whenever \texttt{BetweenThresholds} outputs $\top$, we record this query and use it to shrink the current version space. In particular, each hard query induces a random constraint that (with constant probability) eliminates at least half of the remaining feasible labelings of the fixed query sequence.

Since the adversary is oblivious, the queries $x_1,\dots,x_T$ are fixed in advance, and the number of distinct label sequences realizable by $\mathcal{C}$ on these points is at most $O(T^{VC(\mathcal{C})})$ by Sauer's lemma. Therefore, after at most $O(VC(\mathcal{C})\log T)$ hard queries, only one labeling remains and all remaining hypotheses agree on future queries. Consequently, the number of times \texttt{BetweenThresholds} outputs $\top$ is bounded by $O(\mathrm{VC}(\mathcal{C})\log T)$ (see illustration in Figure~\ref{fig:oblivious 1} and \ref{fig:oblivious 2}).

\begin{figure}[ht]
\begin{center}
\includegraphics[scale=.5]{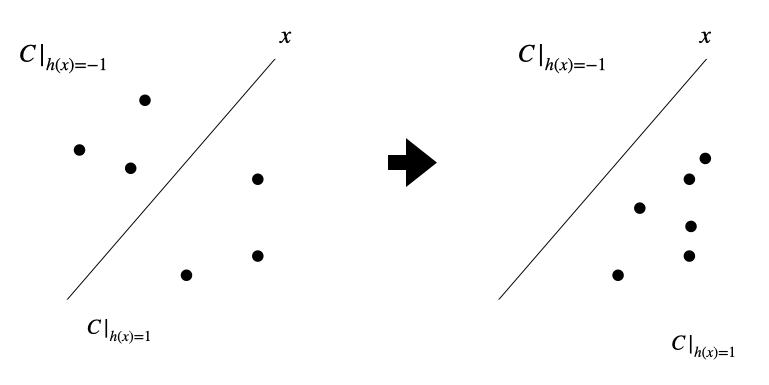}
\caption{In the left figure, we illustrate that when a “hard” query $x$ occurs, the current hypothesis set splits into two subspaces, $\mathcal{C}|_{h(x)=1}$ and $\mathcal{C}|_{h(x)=-1}$. We then guess a label uniformly at random (say, 1) and update the hypothesis set by restricting to $\mathcal{C}|_{h(x)=1}$, as shown in the right figure.\label{fig:oblivious 1}}
\end{center}
\end{figure}

\begin{figure}[ht]
\begin{center}
\includegraphics[scale=.5]{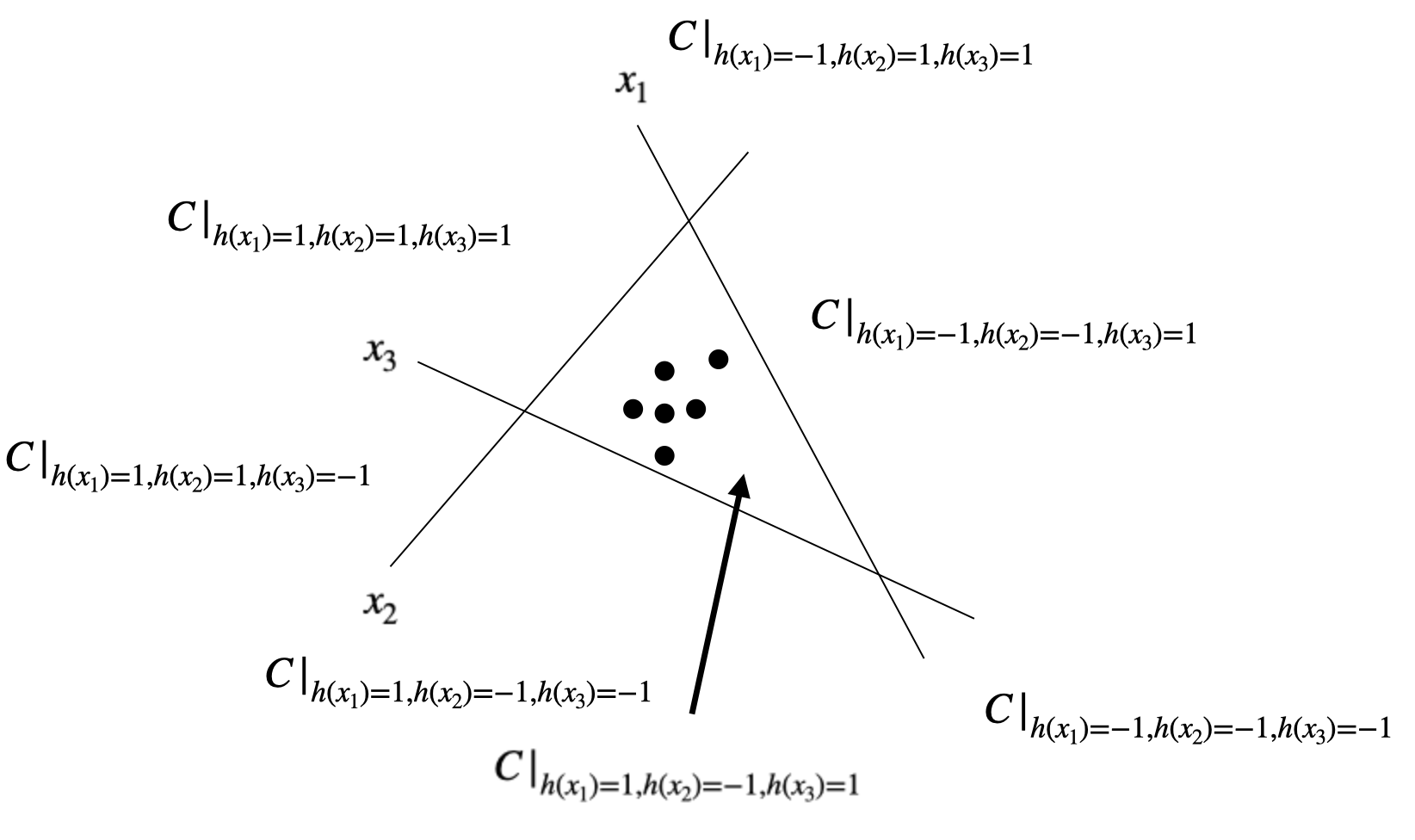}
\caption{After answering $O(VC(\mathcal{C})\log T)$ “hard” queries, the remaining hypothesis space collapses to a set of hypotheses that induce the same labeling on the entire query sequence $x_1,\dots, x_T$ (including queries that have not yet appeared in the stream).\label{fig:oblivious 2}}
\end{center}
\end{figure}

\subsection{Hypotheses Generator for Oblivious Adversary}
We use the generic framework of Section~\ref{sec:generic framework} and design a hypothesis generator for oblivious adversaries.
\begin{algorithm}
    \caption{$G_{oblivious}$: Hypotheses Generator for Oblivious Adversary}

    \textbf{Inputs:} A labeled database $S_i$, $transcript_j=\left((x_1,y_1,\mathsf{Label}_1),\dots,(x_{j-1},y_{j-1},\mathsf{Label}_{j-1})\right)$

    \begin{enumerate}
        \item Let $HardQuery=\{(x,y,\mathsf{Label})\in transcript_j| y=\top\}$. Let $\mathcal{C}|_{HardQuery}=\{c\in\mathcal{C}:c(x_{j^*})=\mathsf{Label}_{j^*}\mbox{ for all } (x_{j^*},y_{j^*},\mathsf{Label}_{j^*})\in HardQuery\}$

        \item Output a hypothesis $f_{i,j}=\arg\min_{h\in\mathcal{C}|_{HardQuery}}[ \mathrm{err}_{S_i}(h)]$.
    \end{enumerate}
\end{algorithm}

\subsection{Analysis}
Here we recall that we partition the dataset $S$ into $k$ subsets $S_1,\dots,S_k$, each with size $|S_i|=m$.
\begin{claim}~\label{clm:approximation of Si}
    If $m\geq O\left(\frac{d\cdot\log\frac{d}{\alpha}+\log\frac{1}{\beta}}{\alpha^2}\right)$, with probability $1-k\beta$, we have 
    \begin{enumerate}
        \item $\left| \mathrm{err}_{S}(h)- \mathrm{err}_{S_i}(h)\right|\leq\alpha$ for all $i\in[k]$ and all $h\in\mathcal{C}$.

        \item $\left| \mathrm{err}_{S_j}(h)- \mathrm{err}_{S_i}(h)\right|\leq2\alpha$ for all $i,j\in[k]$ and all $h\in\mathcal{C}$.
    \end{enumerate}

\end{claim}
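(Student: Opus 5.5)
The plan is to apply Theorem~\ref{thm:alphaApprox} once per block and then take a union bound over the $k$ blocks.

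First I will recast the error of a hypothesis as a range. The sample $S$ is labeled by the target $c$, so every labeled point has the form $(x,c(x))$. For each $h\in\mathcal{C}$ define the disagreement set
\[
r_h=\{(x,y): h(x)\neq y\}.
\]
With this definition, $\mathrm{err}_{S}(h)=|S\cap r_h|/|S|$ and $\mathrm{err}_{S_i}(h)=|S_i\cap r_h|/|S_i|$, where $S$ is treated as a multiset. The range space $R=\{r_h:h\in\mathcal{C}\}$ has VC dimension at most $d=VC(\mathcal{C})$. To see this, restrict to the realizable points $\{(x,c(x))\}$. There, $r_h$ corresponds to $\{x:h(x)\neq c(x)\}$, and the family $\{h\triangle c\}$ shatters exactly the sets that $\mathcal{C}$ shatters, because XOR with the fixed labeling $c$ is a bijection on labelings.

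Next I will handle a single block. Because the partition $S_1,\dots,S_k$ is uniformly random, each $S_i$ is marginally a uniformly random subset of $S$ of size $m$. Theorem~\ref{thm:alphaApprox} applies when $m\geq O\!\left(\frac{d\log(d/\alpha)+\log(1/\beta)}{\alpha^2}\right)$. It gives that, with probability at least $1-\beta$, $S_i$ is an $\alpha$-approximation of $S$ for $R$. That is exactly item 1 for this fixed $i$, simultaneously for all $h\in\mathcal{C}$.

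Then I will combine the blocks. A union bound over $i\in[k]$ makes item 1 hold for all blocks with probability at least $1-k\beta$. Item 2 follows deterministically from the triangle inequality:
\[
|\mathrm{err}_{S_j}(h)-\mathrm{err}_{S_i}(h)|\le|\mathrm{err}_{S_j}(h)-\mathrm{err}_{S}(h)|+|\mathrm{err}_{S}(h)-\mathrm{err}_{S_i}(h)|\le2\alpha .
\]

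The only step needing care is the single-block step. The blocks are dependent, so independence cannot be used, but the union bound only needs each marginal to be a uniform random $m$-subset, which it is. The VC-dimension transfer from $\mathcal{C}$ to the disagreement ranges is a routine check.
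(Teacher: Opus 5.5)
Your proposal is correct and follows the same route as the paper: apply Theorem~\ref{thm:alphaApprox} to each block, take a union bound over the $k$ blocks for item 1, and use the triangle inequality for item 2. Your explicit use of the disagreement ranges $\{h\triangle c : h\in\mathcal{C}\}$, which have VC dimension exactly $d$, supplies a detail the paper glosses over when it says each $S_i$ is an $\alpha$-approximation ``with respect to the concept class $\mathcal{C}$''.
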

\begin{proof}
    By Theorem~\ref{thm:alphaApprox}, with probability $1-\beta$, each $S_i$ is an $\alpha$-approximation of $S$ with respect to the concept class $\mathcal{C}$, and thus $\left|\mathrm{err}_{S}(h)-\mathrm{err}_{S_i}(h)\right|\leq\alpha$. The first statement can follow by union bound. The second statement is the corollary of the first statement.
\end{proof}

\begin{claim}
    Assume \texttt{BetweenThresholds} satisfies the accuracy guarantee of Lemma~\ref{lem:bt-accuracy}. Then $HardQuery$ is a set of realizable labeled points. That is, there exists a hypothesis $h\in\mathcal{C}$ satisfying $h(x_i)=\textsf{Label}_i$ for all pairs $(x_i,\top,\textsf{Label}_i)\in HardQuery$.
\end{claim}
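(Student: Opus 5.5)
The plan is to argue by induction on the rounds $j=1,\dots,T$, maintaining the invariant that the restricted class $\mathcal{C}|_{HardQuery}$ computed from $transcript_j$ is nonempty. A witness $h\in\mathcal{C}|_{HardQuery}$ is exactly a hypothesis of the kind the statement asks for. The invariant is also what makes $G_{oblivious}$ well defined, since the $\arg\min$ in Step~2 is taken over $\mathcal{C}|_{HardQuery}$.

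\emph{Base case.} At the start $HardQuery=\emptyset$, so $\mathcal{C}|_{HardQuery}=\mathcal{C}$. This class is nonempty because it contains the target concept $c$.

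\emph{Easy rounds.} Suppose the invariant holds before round $j$. If \texttt{BetweenThresholds} outputs $L$ or $R$ in round $j$, then $HardQuery$ does not change, and neither does $\mathcal{C}|_{HardQuery}$. So the invariant carries over to round $j+1$.

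\emph{Hard rounds.} This is the case that needs an argument. Suppose round $j$ outputs $\top$, and the label $\mathsf{Label}_j\in\{\pm1\}$ is then chosen uniformly at random. By the invariant, every $f_{i,j}$ lies in the current class $\mathcal{C}|_{HardQuery}$, so each $f_{i,j}$ agrees with all earlier hard labels. I will show that some $f_{i,j}$ satisfies $f_{i,j}(x_j)=\mathsf{Label}_j$. That hypothesis is then consistent with the enlarged set $HardQuery\cup\{(x_j,\top,\mathsf{Label}_j)\}$, which keeps the invariant.

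To find such an $f_{i,j}$, I use the assumed accuracy guarantee of Lemma~\ref{lem:bt-accuracy}. It applies to every (re)initialization of \texttt{BetweenThresholds}, and the hypothesis of the claim lets us assume it holds for all of them. The $\top$ case gives
\[
\tfrac38-\alpha\;\le\; q_{x_j}(F_j)=\tfrac{1}{k}\bigl|\{i: f_{i,j}(x_j)=1\}\bigr| \;\le\; \tfrac58+\alpha .
\]
For $\alpha<3/8$ the left end is positive and the right end is below $1$. Hence the fraction of ensemble members voting $+1$ on $x_j$ lies strictly between $0$ and $1$. So at least one $f_{i,j}$ labels $x_j$ as $+1$ and at least one labels it $-1$, and whichever value $\mathsf{Label}_j$ takes is attained by some $f_{i,j}$.

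The only points needing care are the following.
\begin{itemize}
\item The accuracy parameter of \texttt{BetweenThresholds} must be taken smaller than $3/8$, the margin between the thresholds and the extremes $0,1$. This is compatible with the choice of $k$ in Algorithm~\ref{alg:generic predictor}, up to constants.
\item The ensemble $F_j$ must be drawn from the restricted class, not from all of $\mathcal{C}$. This is guaranteed by the definition of $G_{oblivious}$ together with the inductive hypothesis.
\end{itemize}
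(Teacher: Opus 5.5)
Your proof is correct and uses the same induction as the paper. The restricted class $\mathcal{C}|_{HardQuery}$ stays nonempty because, at a $\top$ round, some ensemble member $f_{i,j}$ (already consistent with all earlier hard labels) agrees with $\mathsf{Label}_j$. You also make explicit a step the paper only asserts: the choice of $k$ gives accuracy parameter $1/8$, so the vote fraction lies in $[1/4,3/4]$ and both labels are represented whatever the random coin gives.
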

\begin{proof}
    We prove it by induction on $|HardQuery|$. In the base case of $|HardQuery|=0$, $HardQuery$ is an empty set, and the result is trivial. 
    
    For the inductive step, suppose the claim holds for all transcripts with at most $a-1$ hard queries. Consider a transcript with $a$ hard queries, and let $(x_{a^*},\mathsf{Label}_{a^*})$ be the most recent one. At round $a^*$, \texttt{BetweenThresholds} output $\top$ in Step~\ref{step:generic betweenthresholds}, which means the ensemble vote $q_{x_{a^*}}=\frac{1}{2}(1+\frac{1}{k}\sum_{i=1}^k f_{i,a^*}(x_{a^*}))$ lies in the uncertain region, so in particular there exists at least one hypothesis $f_{i,a^*}$ with $f_{i,a^*}(x_{a^*})=\mathsf{Label}_{a^*}$. By construction, every $f_{i,a^*}$ is chosen from $\mathcal{C}|_{HardQuery\backslash{(x_{a^*},\top,\textsf{Label}_{a^*})}}$, which is nonempty by the induction hypothesis. Hence, there exists a concept in $\mathcal{C}$ consistent with all previous hard queries and also labeling $x_{a^*}$ as $\mathsf{Label}_{a^*}$, as required.
\end{proof}

\begin{claim}~\label{clm: stop in vclogt times}
    Assume \texttt{BetweenThresholds} satisfies Lemma~\ref{lem:bt-accuracy}. Then with probability $1-\beta$, \texttt{BetweenThresholds} outputs $\top$ at most $O(VC(\mathcal{C})\cdot\log T+\log(1/\beta))$ times.
\end{claim}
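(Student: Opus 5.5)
We track a potential: the number of label sequences on the fixed query sequence that survive the hard-query constraints. Since the adversary is oblivious, $x_1,\dots,x_T$ are fixed before the interaction. Let
\[
\mathcal{L}_j=\bigl\{(h(x_1),\dots,h(x_T)) : h\in\mathcal{C}|_{HardQuery_j}\bigr\},
\]
where $HardQuery_j$ is the set of hard queries recorded before round $j$. By Sauer's lemma, $|\mathcal{L}_1|\le (eT/d)^d$ with $d=VC(\mathcal{C})$, so $\log_2|\mathcal{L}_1|=O(d\log T)$. Adding a constraint can only shrink the set, so $|\mathcal{L}_j|$ is nonincreasing in $j$. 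By the realizability claim, $\mathcal{C}|_{HardQuery_j}$ is nonempty, so $|\mathcal{L}_j|\ge 1$ throughout.

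\textbf{A hard round requires disagreement.} Suppose \texttt{BetweenThresholds} outputs $\top$ at round $j$. By Lemma~\ref{lem:bt-accuracy}, $q_{x_j}(F_j)\in[t_\ell-\alpha',t_u+\alpha']$ with $t_\ell=3/8$, $t_u=5/8$, and some accuracy slack $\alpha'<3/8$ (obtained by choosing $k$ appropriately). Hence at least one $f_{i,j}$ labels $x_j$ with $+1$ and at least one labels it $-1$. All $f_{i,j}$ lie in $\mathcal{C}|_{HardQuery_j}$, so both labels of coordinate $j$ occur in $\mathcal{L}_j$. Partition $\mathcal{L}_j=\mathcal{L}_j^+\cup\mathcal{L}_j^-$ by the value at coordinate $j$; both parts are nonempty.

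\textbf{Each hard round halves the potential with probability $1/2$.} The label $\mathsf{Label}_j$ is a fresh uniform bit, independent of everything before it. Since the queries are fixed, $\mathcal{L}_j^{\pm}$ are determined by the past. The new set is $\mathcal{L}_{j+1}=\mathcal{L}_j^{\mathsf{Label}_j}$, because $\mathcal{C}|_{HardQuery\cup\{x_j\}}$ realizes exactly those sequences. With probability $1/2$ we pick the smaller part, giving $|\mathcal{L}_{j+1}|\le|\mathcal{L}_j|/2$. In every case $|\mathcal{L}_{j+1}|<|\mathcal{L}_j|$, since the other part is nonempty. Once $|\mathcal{L}_j|=1$, all surviving hypotheses agree on every future query, so the vote is $0$ or $1$ and no further $\top$ can occur under the accuracy guarantee.

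\textbf{Concentration.} Call a hard round a \emph{success} if it halves the potential. Successes are indicated by the coin flips $\mathsf{Label}_j$, one per hard round, each a success with probability exactly $1/2$ conditionally on the past. The number of successes is at most $\log_2|\mathcal{L}_1|=O(d\log T)$. Hence having $v$ or more hard rounds requires the first $v$ fair coins (one per hard round) to contain at most $O(d\log T)$ successes. To keep this rigorous, extend the hard-round coin sequence with imaginary coins after the process stops, so that there is a fixed i.i.d.\ sequence of fair coins. A Chernoff bound then shows that with $v=C(d\log T+\log(1/\beta))$, the probability of fewer than $v/4$ successes among $v$ fair coins is at most $e^{-v/16}\le\beta$.

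The main obstacle is this last step: justifying that "success" events are fair coins even though which rounds are hard depends on the noise. Obliviousness handles this, because the partition of $\mathcal{L}_j$ is fixed before the coin is drawn, and the coupling to an i.i.d.\ sequence makes the Chernoff bound apply. I would also check that the accuracy slack in Lemma~\ref{lem:bt-accuracy} is strictly less than $3/8$, so that $\top$ indeed implies disagreement.
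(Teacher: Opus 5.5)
Your proposal is correct and follows essentially the same route as the paper. Sauer's lemma bounds the initial number of label patterns on the fixed query sequence by $O(T^{VC(\mathcal{C})})$, each $\top$ round shrinks this set and halves it with probability at least $1/2$ thanks to the uniformly random label, and a Chernoff bound yields the $O(VC(\mathcal{C})\log T+\log(1/\beta))$ count. Your version is more careful than the paper's in two places: you use the fact that $\top$ forces disagreement among the $f_{i,j}$ to guarantee both parts are nonempty, and you couple to an i.i.d.\ coin sequence to justify Chernoff. One small fix is needed: define a success as the coin landing on a pre-designated smaller part, with ties broken deterministically, so that its conditional probability is exactly $1/2$ even when the two parts have equal size.
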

\begin{proof}
    Since all queries are selected in advance, we can consider all possible labels of $\mathcal{L}=\{h(x_1),\dots,h(x_T)\}_{h\in\mathcal{C}}$. By Sauer's lemma, we know $|\mathcal{L}|\leq O(T^{VC{(\mathcal{C})}})$. 

    Each time \texttt{BetweenThresholds} outputs $\top$, the algorithm outputs a uniformly random label. Conditioned on the history, with probability $1/2$ this random label matches the target label sequence, and the restriction $\mathcal{C}|_{\mathsf{HardQuery}}$ eliminates at least half of the remaining label sequences in $\mathcal{L}$. Let $E$ be the event that a given hard query halves $|\mathcal{L}|$, then $\Pr[E]\ge 1/2$.

    Therefore, after $O(\mathrm{VC}(\mathcal{C})\log T)$ successful halving events, the set of remaining label sequences has size $1$. On the other hand, when the number of hard queries is $O(VC(\mathcal{C})\cdot\log T+\log(1/\beta))$, the Chernoff bound implies that with probability at least $1-\beta$, the event $E$ happens at least $O(\mathrm{VC}(\mathcal{C})\log T)$ times.

    At the end, once $|\mathcal{L}|=1$, all remaining hypotheses agree on all future queries, all $f_{i,j}$ predict the same label for new $x_j$. under the guarantee of Lemma~\ref{lem:bt-accuracy}, \texttt{BetweenThresholds} will not output $\top$ anymore.
\end{proof}

Let $v=O(VC(\mathcal{C})\cdot\log T+\log(1/\beta))$, we have the privacy guarantee of the predictor with an oblivious adversary.

\begin{corollary}[Privacy Guarantee]
    Using $G_{oblivious}$, Algorithm~\ref{alg:generic predictor} is $(O(\sqrt{v\cdot\ln(1/\delta)}\cdot \varepsilon),O(v\cdot\delta))$-differentially private, where $v=O(VC(\mathcal{C})\cdot\log T+\log(1/\beta))$.
\end{corollary}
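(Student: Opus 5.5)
The corollary should follow by combining Claim~\ref{clm: stop in vclogt times} with Claim~\ref{clm:generic privacy}. The one subtlety is that the bound on the number $v$ of $\top$ outputs holds only with probability $1-\beta$, and only under the accuracy event of Lemma~\ref{lem:bt-accuracy}. Privacy, however, must hold for every pair of neighboring datasets, not just typical ones. So the plan is to turn the high-probability stopping bound into a worst-case cap that the algorithm itself enforces.

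\textbf{Step 1.} I would modify the predictor so that it halts, or answers all remaining queries with a fixed label, once \texttt{BetweenThresholds} has output $\top$ more than $v=O(VC(\mathcal{C})\log T+\log(1/\beta))$ times. This truncated algorithm is deterministically a composition of at most $v$ runs of \texttt{BetweenThresholds}, by the Observation preceding Claim~\ref{clm:generic privacy}. Each run is on the ensemble $F_j$, where changing one example of $S$ changes one block $S_i$ and hence one entry of $F_j$. The generator $G_{oblivious}$ depends only on $S_i$ and the public transcript, so the sensitivity is $1/k$ regardless of how the hard-query constraints shrink the class. Lemma~\ref{lem:bt-privacy} then applies to each run with $n=k$, since the queries may be adaptively chosen and may depend on the transcript. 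The $\pm 1/8$ thresholds give a gap of $1/4$, which satisfies the lemma's gap condition for our choice of $k$, up to adjusting constants or a $\log(1/\delta)$ factor. Claim~\ref{clm:generic privacy}, i.e.\ advanced composition via Theorem~\ref{thm:advancedcomposition}, then yields $(O(\sqrt{v\ln(1/\delta)}\,\varepsilon),O(v\delta))$-differential privacy.

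\textbf{Step 2.} I would argue that the truncation changes the transcript only on an event of probability at most $O(\beta)$: the failure of Lemma~\ref{lem:bt-accuracy} or of the Chernoff argument in Claim~\ref{clm: stop in vclogt times}. Consequently the untruncated algorithm is $(O(\sqrt{v\ln(1/\delta)}\,\varepsilon),O(v\delta+\beta))$-private. With $\beta\le\delta$, or with $\beta$ simply absorbed into the stated $\delta$-term as the corollary implicitly does, this gives the stated bound. Alternatively, one can state the corollary for the truncated algorithm, which has identical accuracy guarantees.

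\textbf{Main obstacle.} The delicate point is that Claim~\ref{clm: stop in vclogt times} bounds $v$ on a specific dataset's execution, whereas privacy compares two executions. The truncation device sidesteps this, since the cap is enforced on both neighboring executions. The remaining check is that the cap does not break the Observation's reduction to composition, and it does not, because halting after $v$ tops is itself a post-processing of the \texttt{BetweenThresholds} outputs.
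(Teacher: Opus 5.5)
Your proof is correct, and its backbone matches the paper's. The paper obtains this corollary in one line, by substituting the bound $v$ from Claim~\ref{clm: stop in vclogt times} into Claim~\ref{clm:generic privacy} (advanced composition over the runs of \texttt{BetweenThresholds}). What differs is your truncation-and-coupling step, which the paper lacks and which is genuinely needed. Claim~\ref{clm:generic privacy} requires at most $v$ outputs $\top$ on \emph{every} execution. Claim~\ref{clm: stop in vclogt times} gives this only with probability $1-\beta$, and only on the event that every run of \texttt{BetweenThresholds} is accurate. So the paper implicitly treats a high-probability bound as a worst-case one. Your route gives a rigorous statement: exactly the claimed parameters for the capped algorithm, and for the uncapped algorithm the same parameters plus an additive failure-probability term in $\delta$. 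That extra term is why $\beta\lesssim\delta$, or the capped variant, is needed to match the corollary as written.

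There are two refinements to Step~2.

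First, the cap is triggered with probability $O(v\beta)$, not $O(\beta)$. Lemma~\ref{lem:bt-accuracy} must hold in each of the first $v+1$ runs, and each fails with probability $\beta$ under the paper's choice of $k$ (compare the $1-v\beta$ in Claim~\ref{clm:generic accuracy}); the Chernoff step adds another $\beta$. The coupling then adds $(1+e^{\varepsilon'})$ times this probability to $\delta$. This gives $O(v\delta+v\beta)=O(v\delta)$ when $\beta\le\delta$ and $\varepsilon'=O(1)$.

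Second, contrary to your closing remark, the cap does not fully remove the dependence on which dataset is run. Transferring back to the uncapped algorithm needs the tail bound on the number of $\top$'s for \emph{every} input, including non-realizable neighbours $S'$. The bound does hold, and you should say why. The halving argument uses only the fixed oblivious query sequence, the fresh uniform labels drawn at $\top$ rounds, and Lemma~\ref{lem:bt-accuracy}, which is valid for any input. It never uses that $S$ is realizable or drawn from $\mathcal{D}$.

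Relatedly, on the failure event a $\top$ can occur when all of $F_j$ agree, and the random label may then empty $\mathcal{C}|_{HardQuery}$. So $G_{oblivious}$ needs a default output to be well defined. Any convention in which $f_{i,j}$ still depends only on $S_i$ and the transcript keeps your sensitivity-$1/k$ argument intact.

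Finally, an assumption of the same form, e.g.\ $\beta\le\varepsilon\delta/100$, also makes the gap condition of Lemma~\ref{lem:bt-privacy} hold with the paper's $k$ and gap $1/4$. So your caveat there is settled by the same parameter choice.
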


\begin{claim}~\label{clm: all fi are good}
    Under the guarantee of Claim~\ref{clm:approximation of Si} and Claim~\ref{clm: stop in vclogt times}, all $f_{i,j}$ satisfies $\mathrm{err}_{\mathcal{D}}(f_{i,j})\leq O(v\cdot\alpha)$, where $v=O(VC(\mathcal{C})\cdot\log T+\log(1/\beta))$..
\end{claim}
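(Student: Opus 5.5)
Every $f_{i,j}$ is an empirical risk minimizer over $S_i$ restricted to $\mathcal{C}|_{HardQuery}$, and $HardQuery$ contains at most $v$ points. Some of these points may carry labels that disagree with the target $c$, because the label is a random guess. The plan is therefore to bound the empirical error of $f_{i,j}$ on $S_i$ by $O(v\alpha)$, and then convert this to a bound on $\mathrm{err}_{\mathcal{D}}$ via Theorem~\ref{thm:learn vc}.

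\textbf{Step 1: exhibit a low-error feasible competitor.} I would show by induction on the number of hard queries $a\le v$ that $\mathcal{C}|_{HardQuery}$ contains a hypothesis $g_a$ with $\mathrm{err}_S(g_a)\le O(a\alpha)$, measured on the whole sample $S$.

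For $a=0$ take $g_0=c$, which has error $0$.

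For the inductive step, let $x_{a^*}$ be the new hard query with random label $\mathsf{Label}_{a^*}$. At that round \texttt{BetweenThresholds} output $\top$. By Lemma~\ref{lem:bt-accuracy}, the vote $q_{x_{a^*}}(F_{a^*})$ then lies in $[1/2-1/8-\alpha',\,1/2+1/8+\alpha']$, so a constant fraction of the $f_{i,a^*}$ label $x_{a^*}$ as $\mathsf{Label}_{a^*}$. In particular some $f_{i,a^*}$ does; it lies in $\mathcal{C}|_{HardQuery_{a-1}}$ and agrees with the new label, so it is feasible after the update.

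Its error is controlled because it is an ERM on $S_i$ over $\mathcal{C}|_{HardQuery_{a-1}}$:
\[
\mathrm{err}_{S_i}(f_{i,a^*})\le \mathrm{err}_{S_i}(g_{a-1})\le \mathrm{err}_S(g_{a-1})+\alpha .
\]
Applying Claim~\ref{clm:approximation of Si} once more gives
\[
\mathrm{err}_S(f_{i,a^*})\le \mathrm{err}_S(g_{a-1})+2\alpha .
\]
Setting $g_a=f_{i,a^*}$ then gives $\mathrm{err}_S(g_a)\le 2a\alpha$.

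The hypothesis set only changes at hard rounds, so between hard rounds $g_a$ remains feasible. Claim~\ref{clm: stop in vclogt times} guarantees $a\le v$ throughout.

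\textbf{Step 2: bound the empirical error of every $f_{i,j}$.} For every $i$ and $j$, the ERM property together with Claim~\ref{clm:approximation of Si} gives
\[
\mathrm{err}_{S_i}(f_{i,j})\le \mathrm{err}_{S_i}(g_a)\le \mathrm{err}_S(g_a)+\alpha\le (2v+1)\alpha .
\]

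\textbf{Step 3: pass from $S_i$ to $\mathcal{D}$.} Each $S_i$ is an i.i.d.\ sample from $\mathcal{D}$ of size $m$, since it is a random partition block of an i.i.d.\ sample. Apply Theorem~\ref{thm:learn vc} to $S_i$ with accuracy parameter $\alpha''=10(2v+1)\alpha$, and take a union bound over the $k$ blocks. This needs $m\gtrsim VC(\mathcal{C})\log(1/\alpha'')/\alpha''$, which is implied by the $1/\alpha^2$ sample size already assumed in Claim~\ref{clm:approximation of Si}. The conclusion is that every $h\in\mathcal{C}$ with $\mathrm{err}_{S_i}(h)\le(2v+1)\alpha$ has $\mathrm{err}_{\mathcal{D}}(h)<10(2v+1)\alpha=O(v\alpha)$. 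In particular this covers all $f_{i,j}$, including those chosen adaptively, because the statement is uniform over $\mathcal{C}$.

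\textbf{Main obstacle.} The delicate point is Step 1. A guessed label may contradict $c$, so $c$ can drop out of the version space, and the argument must pick a replacement competitor whose error grows additively by only $O(\alpha)$ per hard query. The key observation that makes this work is that the $\top$ outcome certifies that some current ensemble member, which is itself a near-optimal ERM, already agrees with the guessed label.
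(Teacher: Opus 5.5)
Your proposal is correct and follows essentially the same argument as the paper: induction on the number of hard queries, using the $\top$ outcome to certify an ensemble member that agrees with the guessed label, the ERM property to bound its block error, an additive $2\alpha$ loss per hard query, and finally Theorem~\ref{thm:learn vc}. The differences are minor: you track the competitor's error on the full sample $S$ (applying the first part of Claim~\ref{clm:approximation of Si} twice per step instead of its second part once), and you apply Theorem~\ref{thm:learn vc} to each block with a union bound over the $k$ blocks, rather than first transferring to $S$ and applying it once.
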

\begin{proof}
    By Theorem~\ref{thm:learn vc}, it suffices to prove $\mathrm{err}_S(f_{i,j})\leq O(v\cdot\alpha)$ for all $f_{i,j}$. Claim~\ref{clm: stop in vclogt times} implies that $|HardQuery|\leq v$.
    
    We prove it by induction. In the base case of $|HardQuery|=0$, all $f_{i,j}$ are selected from $\mathcal{C}$ and $\mathrm{err}_{S_i}(f_{i,j})=0$

    Assume when $|HardQuery|\leq c-1$, we have $\min_{h\in\mathcal{C}|_{HardQuery}}[\mathrm{err}_{S_i}(h)]\leq 2(c-1)\alpha$. For the case of  $|HardQuery|=c$, let $(x_{c^*},\top,\textsf{Label}_{c^*})$ be the $c$-th element of $HardQuery$. On the $c^*$-th query, \texttt{BetweenThresholds} output $\top$ in Step~\ref{step:generic betweenthresholds}. It happens when there exists $f_{i^*,c^*}(x_{c^*})=\textsf{Label}_{c^*}$ in query $q_{x_{c^*}}$ for some $i^*\in[k]$. Since $f_{i^*,c^*}$ is selected from $\mathcal{C}|_{HardQuery\backslash (x_{c^*},\top,\textsf{Label}_{c^*})}$, By the induction assumption, we have $\min_{h\in\mathcal{C}|_{HardQuery}}[\mathrm{err}_{S_{i^*}}(h)]\leq 2(c-1)\alpha$. By the selection strategy, we have $\mathrm{err}_{S_{i^*}}(f_{i^*,c^*})\leq 2(c-1)\alpha$. It also works for $f_{i^*,j}$, where $j\geq c^*$ and conditioned on $|HardQuery|=c$ because $HardQuery$ doesn't change. Thus by the second statement of Claim~\ref{clm:approximation of Si}, we have $\mathrm{err}_{S_{i^*}}(f_{i,x_j})\leq 2c\alpha$ for all $i\in[k]$. At the end, this claim can result from $c\leq v$ and the first statement of Claim~\ref{clm:approximation of Si}.

\end{proof}

 Using Claim~\ref{clm:generic accuracy}, we have the following corollary.
\begin{corollary} [Accuracy Guarantee]
    Using $G_{oblivious}$, with probability $1-O((v+k)\cdot\beta)$, every query $x_j$ is predicted by a predictor $h_j$ satisfying $\mathrm{err}_D(c,h_j) \le O(v\cdot\alpha)$
\end{corollary}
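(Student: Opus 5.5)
The plan is to derive the corollary directly from Claim~\ref{clm:generic accuracy}, taking the per-hypothesis accuracy parameter to be $\alpha' := O(v\cdot\alpha)$ as supplied by Claim~\ref{clm: all fi are good}. The argument has three steps.

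\textbf{Step 1: collect the good events.} First I will list the good events and bound each failure probability.
\begin{enumerate}
\item[(i)] Claim~\ref{clm:approximation of Si} holds, so every $S_i$ is an $\alpha$-approximation of $S$. This fails with probability at most $k\beta$.
\item[(ii)] Theorem~\ref{thm:learn vc} applies to $S$, so every $h\in\mathcal{C}$ with small empirical error on $S$ also has small error under $\mathcal{D}$. This fails with probability at most $\beta$.
\item[(iii)] The accuracy guarantee of Lemma~\ref{lem:bt-accuracy} holds for each of the at most $v+1$ instantiations of \texttt{BetweenThresholds}. Our choice of $k$ meets the lemma's hypothesis, and each instantiation fails with probability $\beta$, giving $O(v\beta)$ in total.
\item[(iv)] The halving argument of Claim~\ref{clm: stop in vclogt times} succeeds, so the number of $\top$ outputs is at most $v$. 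This fails with probability at most $\beta$.
\end{enumerate}
A union bound shows that all of these hold simultaneously except with probability $O((v+k)\beta)$.

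\textbf{Step 2: invoke the two claims.} On the intersection of these events, Claim~\ref{clm: all fi are good} gives that every $f_{i,j}$ is $O(v\alpha)$-good with respect to $\mathcal{D}$. Claim~\ref{clm:generic accuracy}, applied with accuracy parameter $O(v\alpha)$ and with the bound $v$ on the number of $\top$ outputs, then shows that every predictor $h_j$ has $\mathrm{err}_{\mathcal{D}}(c,h_j)\le 4\cdot O(v\alpha)=O(v\alpha)$. The failure probability this claim contributes is $v\beta$, which is already absorbed into the $O((v+k)\beta)$ term.

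\textbf{Main obstacle.} The delicate point is a circularity in Step 2. Claim~\ref{clm: stop in vclogt times} bounds the number of $\top$ outputs only under the accuracy of \texttt{BetweenThresholds}. Claim~\ref{clm: all fi are good} in turn uses $|HardQuery|\le v$. Finally, Claim~\ref{clm:generic accuracy} needs both the goodness of the $f_{i,j}$ and the bound on $v$.

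I would resolve this by fixing all the good events in advance, as in Step 1, and then arguing round by round. The accuracy of each \texttt{BetweenThresholds} run does not depend on the bound on $v$, so it can be established first. The bound on $v$ then follows from Claim~\ref{clm: stop in vclogt times}, after which Claim~\ref{clm: all fi are good} and Claim~\ref{clm:generic accuracy} apply in that order.

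When a $\top$ output occurs, $h_j$ is a random label, but Claim~\ref{clm:generic accuracy} already accounts for this case. I will simply cite that claim rather than re-analyse it.
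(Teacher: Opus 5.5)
Your proposal is correct and follows the paper's route: plug the $O(v\alpha)$-goodness of every $f_{i,j}$ from Claim~\ref{clm: all fi are good} into Claim~\ref{clm:generic accuracy}, and collect the failure probabilities ($k\beta$ from Claim~\ref{clm:approximation of Si}, plus $O(v\beta)$ from the \texttt{BetweenThresholds} runs and the halving argument) into $O((v+k)\beta)$. Your explicit union bound over events (i)--(iv) just writes this bookkeeping out more carefully, including the $\beta$ from Theorem~\ref{thm:learn vc} and the ordering of the dependencies, which the paper leaves implicit.
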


Select the appropriate parameters, and we can have the main result. We put the details in Appendix~\ref{appendix:oblivious}

\begin{theorem}
 For any concept class $\mathcal{C}$ with VC dimension $d$, there exists an $(\varepsilon,\delta)$-differentially private predictor that can $(\alpha,\beta)$-predict $T$ queries if the given dataset has size 
 $$
 \begin{array}{rl}
     N
     &=\tilde{O}_{\alpha,\beta,\varepsilon,\delta}\left(d^{3.5}\cdot \log^{3.5}T\right)
 \end{array}
 $$

\end{theorem}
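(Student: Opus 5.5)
The proof is a parameter calculation: instantiate Algorithm~\ref{alg:generic predictor} with $G_{oblivious}$, then invoke the Privacy Guarantee and Accuracy Guarantee corollaries with rescaled parameters. Let $v=O(d\log T+\log(1/\beta))$ be the bound from Claim~\ref{clm: stop in vclogt times} on the number of $\top$ outputs.

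\textbf{Privacy.} Run \texttt{BetweenThresholds} with per-instance parameters $(\varepsilon_0,\delta_0)$. By Claim~\ref{clm:generic privacy}, the overall guarantee is $(O(\sqrt{v\ln(1/\delta_0)}\,\varepsilon_0),O(v\delta_0))$. To reach target $(\varepsilon,\delta)$, set
\[
\delta_0=\Theta(\delta/v),\qquad \varepsilon_0=\Theta\Bigl(\varepsilon/\sqrt{v\ln(v/\delta)}\Bigr).
\]
Two conditions on $k$ must hold.
\begin{itemize}
\item The threshold gap $t_u-t_\ell=1/4$ must satisfy Lemma~\ref{lem:bt-privacy}, which requires $k\gtrsim \frac{1}{\varepsilon_0}\log\frac{1}{\varepsilon_0\delta_0}$.
\item Lemma~\ref{lem:bt-accuracy} must hold with a constant accuracy gap (say $1/16$). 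Each of the at most $v+1$ instances needs failure probability $\beta/v$, which requires $k\gtrsim\frac{1}{\varepsilon_0}(\log T+\log(v/\beta))$.
\end{itemize}
Together these give $k=\tilde O\bigl(\sqrt{v}\log T/\varepsilon\bigr)$, with polylog factors in $1/\delta$ and $1/\beta$ suppressed. Since $v=\tilde O(d\log T)$, this is $k=\tilde O(d^{0.5}\log^{1.5}T)$.

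\textbf{Accuracy.} The Accuracy Guarantee corollary gives error $O(v\alpha_0)$, where $\alpha_0$ is the approximation parameter in Claim~\ref{clm:approximation of Si}. To reach target error $\alpha$, set $\alpha_0=\alpha/\Theta(v)$. Claim~\ref{clm:approximation of Si} then requires
\[
m=O\!\left(\frac{d\log(d/\alpha_0)+\log(k/\beta)}{\alpha_0^2}\right)=\tilde O(d\,v^2/\alpha^2)=\tilde O(d^3\log^2 T).
\]
The per-block failure probability $\beta$ is replaced by $\beta/k$, so that the union bound over the $k$ blocks costs only a logarithmic factor. The same $m$ also meets the requirement of Theorem~\ref{thm:learn vc} for error $O(v\alpha_0)=\alpha$, because that requirement is only linear in $1/\alpha$. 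Rescaling all the $\beta$'s so that the total failure probability $O((v+k)\beta)$ is at most the target $\beta$ costs only logarithmic factors.

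\textbf{Total.} Multiplying,
\[
N=k\cdot m=\tilde O\bigl(d^{0.5}\log^{1.5}T\cdot d^{3}\log^{2}T\bigr)=\tilde O(d^{3.5}\log^{3.5}T).
\]

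The main obstacle is that the parameters are coupled. Here $v$ depends on $\log(1/\beta)$, and both $k$ and $m$ depend on $v$. So I must check that every substitution only introduces $\mathrm{polylog}(d,\log T,1/\beta,1/\delta,1/\varepsilon,1/\alpha)$ factors and creates no circular dependence. Fixing $v$ first as a function of $(d,T,\beta)$, and only then choosing $\varepsilon_0,\delta_0,\alpha_0,k,m$, should break the circularity. Finally, I need to confirm that the union bounds over the at most $v+1$ reinitializations of \texttt{BetweenThresholds} are accounted for.
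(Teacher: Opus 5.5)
Your proposal is correct and follows essentially the same route as the paper's appendix calculation: fix $v=O(d\log T+\log(1/\beta))$, set the per-instance parameters to $\varepsilon_0\approx\varepsilon/\sqrt{v\ln(v/\delta)}$, $\delta_0\approx\delta/v$, $\alpha_0\approx\alpha/v$, obtain $k=\tilde O(d^{0.5}\log^{1.5}T)$ and $m=\tilde O(d^{3}\log^{2}T)$, and multiply. You also explicitly check the threshold-gap condition of Lemma~\ref{lem:bt-privacy}, which the paper's choice of $k$ leaves implicit; this is a useful addition and costs only $\mathrm{polylog}(1/\varepsilon,1/\delta)$ factors, which $\tilde O$ absorbs.
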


\section{Prediction for Halfspaces with Adaptive Adversary}
In this section, we construct a hypothesis generator specialized to the halfspace class under an adaptive adversary. Our construction is based on the reduction from PAC learning halfspaces to linear feasibility problem and the corresponding definition $cdepth$, which are introduced by Kaplan et al.~\cite{kaplan2020private}. As in the generic framework, whenever \texttt{BetweenThresholds} outputs $\top$ we treat the corresponding query as a hard query and record it. The crucial difference is that, for halfspaces, each hard query can be interpreted as a geometric constraint in an associated linear-feasibility representation: the hard query identifies a hyperplane, and we restrict attention to the intersection of the approximate feasible region with this hyperplane.

By repeatedly intersecting with the hyperplanes induced by hard queries, we progressively collapse the approximate feasible region. In $\mathbb{R}^{d+1}$, after at most $d+1$ such independent intersections, the feasible set reduces to a single candidate, implying that all remaining hypotheses agree on every future query. Consequently, \texttt{BetweenThresholds} can output $\top$ only $d+1$ times.

\subsection{Additional Preliminaries}
For $x = (x_1,\ldots,x_d)$ and $y = (y_1,\ldots,y_d)$, we denote by $\langle{x,y}\rangle = \sum_{i=1}^d x_i y_i$ the inner-product of $x$ and $y$. For $a\in\mathbb{R}^d$ and $w\in\mathbb{R}$, we define the predicate $h_{a,w}(x) = \left(\langle a,x\rangle \geq w\right)$. Let $H_{a,w}$ be the halfspace $\{x\in\mathbb{R}^d \colon h_{a,w}(x) = 1\}$.

\begin{definition}[Halfspaces and hyperplanes]
    For $a_1,\dots,a_d,w\in\mathbb{R}$, the halfspace  predicate $h_{a_1,\dots,a_d,w}:\mathbb{R}^d\rightarrow\{\pm1\}$ is defined as $h_{a_1,\dots,a_d,w}(x_1,\dots,x_d)=1$ if and only if $\sum_{i=1}^d a_ix_i\geq w$. Define the concept class $\mbox{HALFSPACE}_d=\{h_{a_1,\dots,a_d,w}\}_{a_1,\dots,a_d,w\in\mathbb{R}}$. 
\end{definition}

\paragraph{Linear Feasibility Problem~\cite{kaplan2020private}.}
In a linear feasibility problem, we are given a feasible collection $S\in(\mathbb{R}^{d+1})^n$ of $n$ linear constraints over $d$ variables $x_1,\dots,x_d$. The target is to find a solution in $\mathbb{R}^d$ that satisfies all (or most) constraints. Each constraint has the form $h_{a,w}(x) = 1$ for some $a\in\mathbb{R}^d$ and $w\in \mathbb{R}$.

\begin{definition}[$(\alpha,\beta)$-solving $(d,n)$-linear feasibility~\cite{kaplan2020private}\footnote{In~\cite{kaplan2020private}, the authors consider the coefficients are in a domain $X$. In this work, we let $X$ to be $\mathbb{R}$.}]
    We say that an algorithm $(\alpha,\beta)$-solves $(d,n)$-linear feasibility if for every feasible collection of $n$ linear constraints over $d$ variables, with probability $1-\beta$ the algorithm finds a solution $(x_1,\dots,x_d)$ that satisfies at least $(1-\alpha)n$ constraints.
\end{definition}

Notice that there exists a reduction from PAC learning of halfspaces to solving linear feasibility. 
Each input labeled point $((x_1,\dots,x_d),y)\in\mathbb{R}^d\times\{-1,1\}$ is transformed to a linear constraint $h_{y\cdot(x_1,\dots,x_d,-1),0}$.
By Theorem~\ref{thm:learn vc}, if we can $(\alpha,\beta)$-solve $(d+1,n)$ linear feasibility problems, then the reduction results is an $(O(\alpha),O(\beta))$-PAC learner for $d$ dimensional halfspaces.

\begin{definition}[Convexification of a function $f:(\mathbb{R}^{d+1})^*\times\mathbb{R}^d\rightarrow \mathbb{R}$~\cite{kaplan2020private}]
    For $S\in(\mathbb{R}^{d+1})^*$ and $y\in\mathbb{R}$, let $\mathcal{D}_{S}(y)=\{z\in\mathbb{R}^d:f(S,z)\geq y\}$. The convexification of $f$ is the function $f_{Conv}:(\mathbb{R}^{d+1})^*\times\mathbb{R}^d\rightarrow \mathbb{R}$ defined by $f_{Conv}(S,x)=\max\{y\in\mathbb{R}:x\in ConvexHull(\mathcal{D}_{S(y)})\}$.
\end{definition}
I.e., if $x$ is in the convex hull of points $Z\subset \mathbb{R}^d$\remove{such that $f(S,z)\geq f(S,x)$ for all $z\in Z$} then $f_{Conv}(S,x)$ is at least $\min_{z\in Z}(f(S,z))$. 

\begin{definition}[$depth$ and $cdepth$~\cite{kaplan2020private}]\label{def:depth-cdepth}
Let $S$ be a collection of predicates. Define $depth_{S}(x)$ to be the number of predicates $h_{a,w}$ in $S$ such that $h_{a,w}(x)=1$.
Let $cdepth_{S}(x)=f_{Conv}(\mathcal{S},x)$ for the function $f(S,x)=depth_{\mathcal{S}}(x)$.
\end{definition}

\begin{fact}[\cite{kaplan2020private}]\label{fact:cdepth to depth}
    For any $S\in(\mathbb{R}^d\times \mathbb{R})^*$ and any $x\in\mathbb{R}^d$, it holds that 
    $$
    depth_{S}(x)\geq (d+1)\cdot cdepth_{S}(x)-d|S|.
    $$
\end{fact}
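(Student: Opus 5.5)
The plan is to combine Carath\'eodory's theorem with the convexity of each halfspace, and then count violated predicates by a union bound. Throughout, write $c = cdepth_S(x)$, $n=|S|$, and $\mathcal{D}_S(y)=\{z\in\mathbb{R}^d : depth_S(z)\geq y\}$.

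\textbf{Step 1: the maximum in the definition of $cdepth$ is attained.} The function $depth_S(\cdot)$ takes only integer values in $\{0,1,\dots,n\}$. Hence the family of sets $\mathcal{D}_S(y)$ takes only finitely many distinct values as $y$ varies, and they are nested (decreasing in $y$). So the set of $y$ with $x\in ConvexHull(\mathcal{D}_S(y))$ has a maximum, and we may take $c$ to be the largest of these values. Consequently $x \in ConvexHull(\mathcal{D}_S(c))$.

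\textbf{Step 2: Carath\'eodory.} By Carath\'eodory's theorem in $\mathbb{R}^d$, $x$ is a convex combination $x=\sum_{i=1}^{d+1}\lambda_i z_i$ of at most $d+1$ points $z_1,\dots,z_{d+1}\in \mathcal{D}_S(c)$. If fewer points are needed, we repeat a point to get exactly $d+1$. Each $z_i$ satisfies $depth_S(z_i)\geq c$. Equivalently, each $z_i$ violates at most $n-c$ of the predicates in $S$.

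\textbf{Step 3: a violated predicate at $x$ is violated at some $z_i$.} Each predicate $h_{a,w}\in S$ defines the halfspace $H_{a,w}=\{z:\langle a,z\rangle\geq w\}$, which is convex. Suppose $h_{a,w}(x)\neq 1$. If every $z_i$ lay in $H_{a,w}$, then by convexity so would their convex combination $x$, which is a contradiction. Hence some $z_i$ also violates $h_{a,w}$.

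\textbf{Step 4: counting.} By Step 3, the set of predicates violated by $x$ is contained in the union over $i$ of the sets violated by $z_i$. Therefore
\[
n - depth_S(x) \;\leq\; \sum_{i=1}^{d+1}\bigl(n - depth_S(z_i)\bigr) \;\leq\; (d+1)(n-c).
\]
Rearranging gives $depth_S(x)\geq (d+1)c - dn$, which is the claimed inequality.

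The only delicate point is Step 1. One must make sure the maximum in the definition of $f_{Conv}$ is attained, so that $x$ genuinely lies in $ConvexHull(\mathcal{D}_S(c))$ and not merely in a limit of such hulls. This is exactly where integrality of $depth$ is used. The rest is a direct application of Carath\'eodory's theorem and convexity of halfspaces.
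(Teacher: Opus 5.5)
Your proof is correct and is essentially the standard argument from \cite{kaplan2020private}, which the paper cites without reproducing: Carath\'eodory gives at most $d+1$ points of depth at least $cdepth_S(x)$ whose convex hull contains $x$, convexity of each halfspace forces every predicate violated at $x$ to be violated at one of them, and a union bound over the violated sets yields $depth_S(x)\geq (d+1)\cdot cdepth_S(x)-d|S|$. Your Step 1, which uses integrality of $depth$ to show the maximum in the definition of $cdepth$ is attained, is a careful detail that the usual write-up leaves implicit.
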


By the above fact, if we can find a point with $cdepth_{S}\geq (1-\alpha)|S|$ where $\alpha \ll 1/(d+1)$, then this point has $depth_S\approx |S|$.

\begin{lemma}[Approximation of cdepth~\cite{NTY25}]~\label{lem:vc for cdepth}
    Let $S\subseteq (\mathbb{R}^d)^n$ and 
    let $S'\subseteq S$ be a random subset of $S$ with cardinality $m=|S'|\geq O\left(\frac{d\cdot\log(\frac{d}{\alpha})+\log\frac{1}{\beta}}{\alpha^2}\right)$. 
    Then, with probability at least $1-\beta$, for all $p\in \mathbb{R}^d$, if $cdepth_{S'}(p)=\gamma' m$ and $cdepth_{S}(p)=\gamma n$ then $|\gamma-\gamma'|\leq\alpha$.
\end{lemma}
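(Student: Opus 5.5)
We plan to treat Lemma~\ref{lem:vc for cdepth} as a uniform-convergence statement for a range space of bounded VC dimension, and then apply Theorem~\ref{thm:alphaApprox} to that range space. The main difficulty is that $cdepth$ is not a counting function of the form $|S\cap r|$. It is defined through convex hulls of superlevel sets of $depth$, so we must first rewrite it in terms of halfspace counts.

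\textbf{Step 1: a halfspace characterization of $cdepth$.} Every predicate in $S$ has the form $h_{a,w}$, so it is the indicator of a closed halfspace in $\mathbb{R}^d$. We will show that
\[
cdepth_S(p)=\min_{H\ni p}\bigl|\{s\in S: s \text{ is satisfied at some point of } H\}\bigr|,
\]
where $H$ ranges over closed halfspaces containing $p$. The idea is Tukey-depth duality, applied to the superlevel sets $\mathcal{D}_S(y)$.

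For the inequality ``$\le$'': suppose $p\in ConvexHull(\mathcal{D}_S(y))$ and $H\ni p$ is a closed halfspace. Then $H$ must meet $\mathcal{D}_S(y)$, since otherwise the convex hull would lie in the open complement of $H$. So some point $z\in H$ satisfies at least $y$ predicates. Hence at least $y$ predicates are satisfied somewhere in $H$.

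For the inequality ``$\ge$'': if $p\notin ConvexHull(\mathcal{D}_S(y))$, separation gives a closed halfspace $H\ni p$ that misses $\mathcal{D}_S(y)$. We then need to turn this into a bound on how many predicates are satisfiable in $H$. This is exactly where the argument is delicate. A simpler fallback is to work with the finite arrangement: $depth$ is constant on the cells of the arrangement of the $n$ boundary hyperplanes, so $\mathcal{D}_S(y)$ is a finite union of polyhedral cells.

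\textbf{Step 2: express $cdepth$ through a bounded-VC range space, with normalization.} Once Step 1 expresses $cdepth_S(p)$ through quantities $|S\cap r|$, with $r$ ranging over a family $R$ of sets of predicates (for example, ``predicates whose halfspace meets $H$'', or sets defined by sign patterns relative to the relevant cells), we bound the VC dimension of $R$. Each such condition is a Boolean combination of $O(1)$ linear or polynomial sign conditions in the $d+1$ coefficients $(a,w)$ of a predicate, parametrized by $O(d)$ real parameters. Standard Warren/Milnor-type bounds then give $VC(R)=O(d\log d)$, or $O(d)$.

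For normalization, write $cdepth_S(p)=\gamma n$ and $cdepth_{S'}(p)=\gamma' m$. We show the characterization holds with the same family $R$ for $S$ and for $S'$, which gives
\[
\gamma=\min_{r\in R_p}\frac{|S\cap r|}{n}, \qquad \gamma'=\min_{r\in R_p}\frac{|S'\cap r|}{m}.
\]

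\textbf{Step 3: conclude by uniform convergence.} Apply Theorem~\ref{thm:alphaApprox} to $(S,R)$ with the stated $m$. With probability $1-\beta$, $S'$ is an $\alpha$-approximation of $S$ for $R$. Minima of functions that are uniformly $\alpha$-close differ by at most $\alpha$, so $|\gamma-\gamma'|\le\alpha$ holds simultaneously for all $p$.

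The main obstacle is Step 1. Expressing $\max\{y : p\in ConvexHull(\mathcal{D}_S(y))\}$ as a minimum of counts over a single VC-bounded family requires care: the superlevel sets of $depth$ are unions of arrangement cells, not halfspaces, and the family must not depend on $S$. If the exact equality fails, we will prove two-sided inequalities through the arrangement that are uniform in $S$, which is enough for the $\alpha$-approximation argument.
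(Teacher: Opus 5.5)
The paper gives no proof of this lemma; it imports it from \cite{NTY25}. Your route has a genuine gap at Step~1: the proposed identity is false, not merely delicate. The quantity $|\{s\in S: s\cap H\neq\emptyset\}|$ counts predicates that are \emph{individually} satisfiable somewhere in $H$. But $cdepth$ is governed by how many predicates hold \emph{simultaneously} at one point. Two closed halfspaces $\{z:\langle v,z\rangle\le c\}$ and $\{z:\langle u,z\rangle\ge 1\}$ (unit $u,v$) are disjoint only if $u=v$. So your count is at least $n$ minus the number of predicates sharing $H$'s outward normal, i.e.\ essentially $n$ for every $H$. For a concrete counterexample, take in $\mathbb{R}^2$ the four predicates $z_1\ge 1$, $-z_1\ge 1$, $z_2\ge 1$, $-z_2\ge 1$. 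Every closed halfspace meets at least three of them, yet no point satisfies more than two. Hence $\mathcal{D}_S(3)=\emptyset$ and $cdepth_S\le 2$ everywhere, while your formula gives at least $3$. Only your ``$\le$'' direction survives, and it is vacuous. What separation actually gives is the min--max expression $\min_{H\ni p}\max_{z\in H} depth_S(z)$. Even that needs care at boundary points, since the convex hull of a finite union of closed polyhedra need not be closed. In any case it is not of the form $\min_{r\in R_p}|S\cap r|$, so Step~2 has nothing to apply to. The arrangement fallback is not a concrete argument.

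The missing idea is that no closed form for $cdepth$ is needed, because convexification is monotone and commutes with positive rescaling and additive shifts. Suppose $|depth_S(z)/n-depth_{S'}(z)/m|\le\alpha$ for all $z\in\mathbb{R}^d$. Then $\mathcal{D}_S(\gamma n)\subseteq\mathcal{D}_{S'}((\gamma-\alpha)m)$. So $p\in\mathrm{conv}(\mathcal{D}_S(\gamma n))$ implies $p\in\mathrm{conv}(\mathcal{D}_{S'}((\gamma-\alpha)m))$, i.e.\ $\gamma'\ge\gamma-\alpha$. The symmetric inclusion gives $\gamma\ge\gamma'-\alpha$, simultaneously for all $p$.

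Uniform approximation of $depth$ is exactly the statement that $S'$ is an $\alpha$-approximation of $S$ for the dual range space with ranges $r_z=\{h_{a,w}\in S:\langle a,z\rangle\ge w\}$. Viewing each predicate as its parameter vector $(a,w)\in\mathbb{R}^{d+1}$, $r_z$ is the homogeneous halfspace $\{(a,w):\langle (a,w),(z,-1)\rangle\ge 0\}$. So the VC dimension is at most $d+1$, and Theorem~\ref{thm:alphaApprox} gives the stated bound on $m$ with no Warren/Milnor-type argument. Your Step~3 intuition (uniformly close inputs give uniformly close outputs) is the right one. It just has to be applied to the convexification operator itself rather than to a characterization of it.
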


\subsection{Hypotheses Generator for Halfspaces Class}
We adapt the approach of Nissim et al.~\cite{NTY25} to design the hypothesis generator for halfspaces. 
The key step is to reduce learning a consistent halfspace to maintaining the feasibility of a suitable linear program (equivalently, a convex feasibility region in a dual parameter space). In this representation, a parameter point induces a hypothesis, and points of large $depth$ with respect to a labeled subset correspond to hypotheses with small empirical error on that subset. (See Figure~\ref{fig:halfspace} for illustration)

A technical complication is that updating according to $depth$ may incur an exponential increase of error, so we work with the convex version $cdepth$: at each round, we select a point of high $cdepth$ within the current feasible region. This choice has two crucial consequences. First, whenever \texttt{BetweenThresholds} outputs $\top$ (a hard query), the corresponding query induces a hyperplane constraint, and one can show that this hyperplane contains at least one point whose $cdepth$ remains large. Thus, we can safely shrink the approximate feasible region by intersecting it with this hyperplane without losing the existence of a high-$cdepth$ point. Second, high $cdepth$ implies high depth (by Fact~\ref{fact:cdepth to depth}), which in turn implies that the induced hypothesis has low empirical error on the block.

Finally, each hard query adds one hyperplane intersection to the feasible region. In $\mathbb{R}^{d+1}$, after at most $d+1$ such intersections (under the progress guarantee from~\cite{NTY25}), the feasible region collapses to essentially a single point, at which point all remaining hypotheses agree on future queries and \texttt{BetweenThresholds} no longer outputs $\top$.

\begin{figure}[ht]
\begin{center}
\includegraphics[scale=.2]{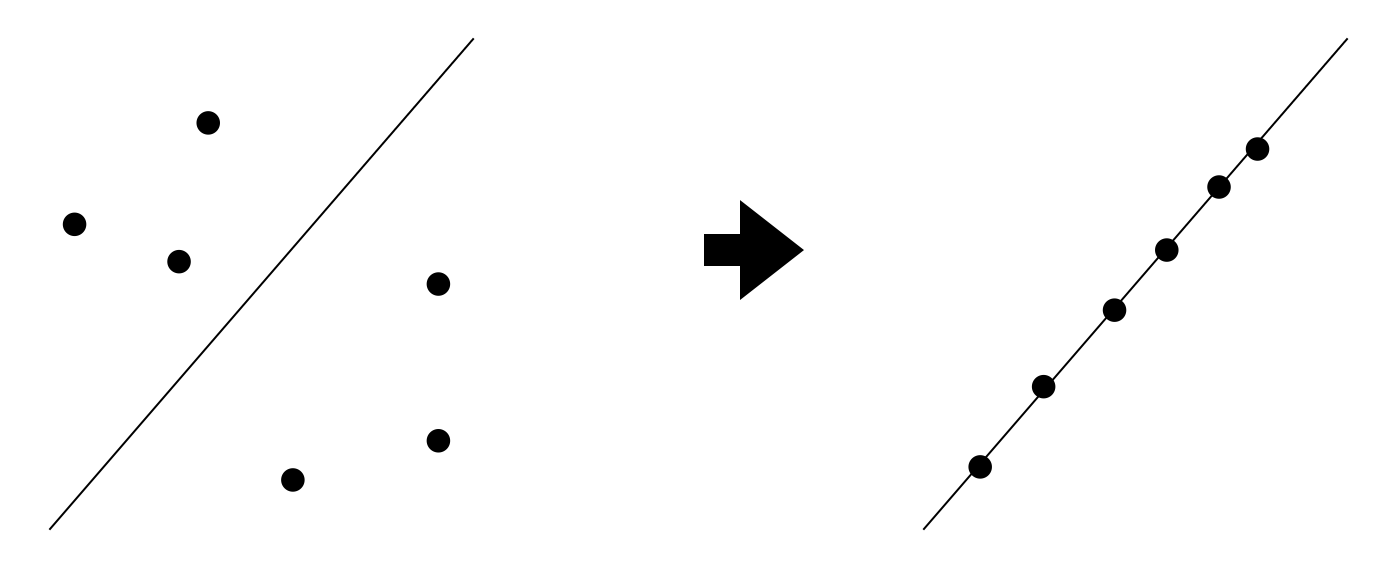}
\caption{In the left figure, we illustrate that when a “hard” query occurs, the current hypotheses are roughly split across the two sides of the induced hyperplane. We then update the feasible set by restricting all hypotheses to lie on this hyperplane for subsequent rounds, while preserving the existence of a hypothesis (point) with high $cdepth$, as shown in the right figure.\label{fig:halfspace}}
\end{center}
\end{figure}

\begin{algorithm}
    \caption{$G_{halfspace}$: Hypotheses Generator for Halfspaces Class}

    \textbf{Inputs:} A labeled database $S_i$, $transcript_j=\left((x_1,y_1,\textsf{Label}_1),\dots,(x_{j-1},y_{j-1},\textsf{Label}_{j-1})\right)$, where $x_1,\dots,x_{j-1}\in \mathbb{R}^d$, $y_1,\dots, y_{j-1}\in\{\pm 1,\top\},\;\textsf{Label}_1,\dots,\textsf{Label}_{j-1}\in\{\pm1\}$.

    \begin{enumerate}
        \item Let $HardQuery=\{(x,y,\textsf{Label})\in transcript_j| y=\top\}$. Let $\mathcal{C}|_{HardQuery}=\{c=(a_1,\dots,a_d,w):\langle a,x_{j^*}\rangle=w\mbox{ for all } (x_{j^*},y_{j^*},\textsf{Label}_{j^*})\in HardQuery\}$
        
        /* That is, the intersection space of all hyperplanes in $HardQuery$ and in the linear feasibility problem. */

        \item Output a hypothesis $f_{i,j}=\arg\max_{h\in\mathcal{C}|_{HardQuery}}[cdepth_{S_i}(h)]$. (That is, selecting
        
        $f_{i,j}$ from $\mathcal{C}|_{HardQuery}$, such that $cdepth_{S_i}(f_{i,j})=\max_{h\in\mathcal{C}|_{HardQuery}}[cdepth_{S_i}(h)])$
    \end{enumerate}
\end{algorithm}

\subsubsection{Analysis}
Here we recall that we partition the dataset $S$ into $k$ subsets $S_1,\dots,S_k$, each with size $|S_i|=m$.
\begin{claim}~\label{clm:approximation of cdepth}
    If $m\geq O\left(\frac{d\cdot\log\frac{d}{\alpha}+\log\frac{1}{\beta}}{\alpha^2}\right)$, then with probability $1-k\beta$, we have 
    \begin{enumerate}
        \item $\left|\frac{cdepth_{S}(h)}{N}-\frac{cdepth_{S_i}(h)}{m}\right|\leq\alpha$ for all $i\in[k]$ and all $h\in\mathcal{C}$.

        \item $\left|\frac{cdepth_{S_j}(h)}{m}-\frac{cdepth_{S_i}(h)}{m}\right|\leq2\alpha$ for all $i,j\in[k]$ and all $h\in\mathcal{C}$.
    \end{enumerate}

\end{claim}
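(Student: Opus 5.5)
The plan mirrors the proof of Claim~\ref{clm:approximation of Si}, with Lemma~\ref{lem:vc for cdepth} taking the place of Theorem~\ref{thm:alphaApprox}. The ground set is $S$, viewed through the reduction as a collection of $N$ linear constraints in $\mathbb{R}^{d+1}$. Each block $S_i$ arises from a uniformly random partition of $S$, so its marginal distribution is that of a uniformly random subset of $S$ of cardinality $m$. Hence, for a fixed $i$, Lemma~\ref{lem:vc for cdepth} applies with ambient dimension $d+1$ instead of $d$. This changes the required lower bound on $m$ only by constant factors, since $(d+1)\log\frac{d+1}{\alpha}=O(d\log\frac{d}{\alpha})$.

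For a fixed $i$, the lemma then says the following with probability at least $1-\beta$. For every point $h$ of the dual parameter space, which means every candidate hypothesis $h\in\mathcal{C}$, the normalized values $\gamma=cdepth_S(h)/N$ and $\gamma'=cdepth_{S_i}(h)/m$ satisfy $|\gamma-\gamma'|\le\alpha$. This is exactly the first statement for that $i$. Note that the lemma's guarantee is uniform over all points $p$, so it holds simultaneously for all $h$ without any further union bound over hypotheses.

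Next I take a union bound over the $k$ blocks. The marginal of each block is uniform, so dependence between blocks is irrelevant. The failure probability is at most $k\beta$, so with probability $1-k\beta$ the first statement holds for all $i\in[k]$ and all $h$.

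The second statement then follows deterministically from the triangle inequality:
\[
\left|\frac{cdepth_{S_j}(h)}{m}-\frac{cdepth_{S_i}(h)}{m}\right|\le\left|\frac{cdepth_{S_j}(h)}{m}-\frac{cdepth_{S}(h)}{N}\right|+\left|\frac{cdepth_{S}(h)}{N}-\frac{cdepth_{S_i}(h)}{m}\right|\le 2\alpha .
\]

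The only real care point is justifying the application of Lemma~\ref{lem:vc for cdepth}. It is stated for a random subset of a given size, so I have to check that a block of a random equal-size partition is distributed that way; it is, by exchangeability. I also have to match dimensions: the constraints live in $\mathbb{R}^{d+1}$ after the reduction, which again only affects constants. Apart from these two checks, the argument is routine.
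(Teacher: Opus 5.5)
Your proposal is correct and matches the paper's proof: apply Lemma~\ref{lem:vc for cdepth} to each fixed block, take a union bound over the $k$ blocks, and get the second statement from the first by the triangle inequality. Your two extra checks are details the paper leaves implicit: that a block of a uniformly random equal-size partition is a uniformly random $m$-subset of $S$, and that working with constraints in $\mathbb{R}^{d+1}$ changes only constants.
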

\begin{proof}
    By Lemma~\ref{lem:vc for cdepth}, with probability $1-\beta$, we have $\left|\frac{cdepth_{S}(h)}{N}-\frac{cdepth_{S_i}(h)}{m}\right|\leq\alpha$ for any fixed $i\in[k]$. The first statement can follow by the union bound. The second statement is the corollary of the first statement.
\end{proof}

\begin{claim}
    When \texttt{BetweenThresholds} runs under the guarantee of Lemma~\ref{lem:bt-accuracy}, $\mathcal{C}|_{HardQuery}$ is always non-empty.
\end{claim}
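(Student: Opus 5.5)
We argue by induction on $|HardQuery|$, following the same pattern as the analogous claim for $G_{oblivious}$. For the base case, $HardQuery=\emptyset$ and $\mathcal{C}|_{HardQuery}$ is the whole parameter space $\mathbb{R}^{d+1}$, which is non-empty. The induction should also keep the structural fact that $\mathcal{C}|_{HardQuery}$ is an affine subspace, namely an intersection of hyperplanes $\{(a,w):\langle a,x_{j^*}\rangle=w\}$ in parameter space. This convexity is what makes the inductive step go through.

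For the inductive step, let $(x_{c^*},\top,\textsf{Label}_{c^*})$ be the most recent hard query. Let $A$ be the affine subspace $\mathcal{C}|_{HardQuery}$ determined by the earlier hard queries; it is non-empty by the induction hypothesis. At round $c^*$, every $f_{i,c^*}$ was chosen as an argmax of $cdepth_{S_i}$ over $A$, so each $f_{i,c^*}=(a^{(i)},w^{(i)})\in A$. The step to check is that the new hyperplane $P=\{(a,w):\langle a,x_{c^*}\rangle=w\}$ meets $A$. Since \texttt{BetweenThresholds} returned $\top$ at round $c^*$, Lemma~\ref{lem:bt-accuracy} gives $t_\ell-\alpha\le q_{x_{c^*}}(F_{c^*})\le t_u+\alpha$. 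With $t_\ell=3/8$, $t_u=5/8$ and $\alpha<3/8$, the vote fraction is strictly between $0$ and $1$. Hence some $i_1$ has $f_{i_1,c^*}(x_{c^*})=1$, i.e.\ $\langle a^{(i_1)},x_{c^*}\rangle-w^{(i_1)}\ge 0$, and some $i_2$ has $f_{i_2,c^*}(x_{c^*})=-1$, i.e.\ $\langle a^{(i_2)},x_{c^*}\rangle-w^{(i_2)}<0$.

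Now consider the map $g(a,w)=\langle a,x_{c^*}\rangle-w$. It is an affine function of the parameters, so it is continuous along the segment joining $f_{i_1,c^*}$ and $f_{i_2,c^*}$, and it changes sign there. By the intermediate value theorem, or simply by solving the linear equation for the convex combination, some point of the segment has $g=0$. That point lies in $P$. It also lies in $A$, because $A$ is affine and therefore convex, so it contains every convex combination of its points. Thus $A\cap P=\mathcal{C}|_{HardQuery}$ is non-empty, and it is again an affine subspace, which completes the induction. If the label is flipped by the random choice, the argument is symmetric, since it never uses $\textsf{Label}_{c^*}$.

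The main obstacle is bookkeeping rather than mathematics. The induction must maintain that every $f_{i,j}$ lies in the current constrained set, and that this set is convex. The accuracy event of Lemma~\ref{lem:bt-accuracy} must hold across all reinitializations of \texttt{BetweenThresholds}, which requires a union bound over the at most $v$ restarts. The argmax of $cdepth$ over $A$ must also be attained, or be replaced by an approximate maximizer in $A$; the argument only needs the chosen points to lie in $A$, not their optimality. I would also note that the hyperplane is taken in the lifted $(d+1)$-dimensional parameter space. This matches the linear-feasibility reduction, where $h_{a,w}(x)$ depends on the sign of $\langle(a,-w),(x,1)\rangle$.
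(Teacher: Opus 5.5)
Your proof is correct and follows the paper's argument. You induct on $|HardQuery|$; a $\top$ output forces two ensemble members onto opposite sides of the hyperplane $\langle a,x_{c^*}\rangle=w$, so the segment joining them stays in the earlier constrained set and meets that hyperplane. You make explicit two points the paper leaves implicit, namely the convexity of the earlier constrained set and why the vote cannot be unanimous. Note also that every constraint $\langle a,x\rangle=w$ is homogeneous in $(a,w)$, so the origin always lies in $\mathcal{C}|_{HardQuery}$. Non-emptiness by itself is therefore trivial, and the segment argument earns its keep by producing the point reused in the later $cdepth$ bound.
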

\begin{proof}
    We prove it by induction on $|HardQuery|$. In the base case of $|HardQuery|=0$, then $\mathcal{C}|_{HardQuery}$ and the claim is immediate. 
    
    Assume the claim holds for the cases $|HardQuery|\leq a-1$, then we prove it also holds for $|HardQuery|= a$. Let $(x_{a^*},\top,\textsf{Label}_{a^*})$ be the $a$-th element of $HardQuery$. On the $a^*$-th query, \texttt{BetweenThresholds} output $\top$ in Step~\ref{step:generic betweenthresholds}. It happens when there exists $f_{i,a^*}(x_{{a^*}})=\textsf{Label}_{a^*}$ and $f_{i',a^*}(x_{{a^*}})=-\textsf{Label}_{a^*}$ in query $q_{x_{a^*}}$ for some $i,i'\in[k]$. 
    
    Viewing hypotheses as feasible points in the associated linear-feasibility (dual) space, the condition $f(x_{a^*})=\pm \mathsf{Label}_{a^*}$ corresponds to lying on opposite sides of the hyperplane induced by $x_{a^*}$. Hence the segment joining these two feasible points intersects that hyperplane. Let $p$ be an intersection point. Then $p$ satisfies all previous hard-query constraints and also satisfies the new constraint induced by $x_{a^*}$. Therefore $p\in \mathcal{C}|_{HardQuery}$, proving that the restricted class remains non-empty.

\end{proof}

\begin{claim}~\label{clm: halfspace stop in d times}
    When \texttt{BetweenThresholds} runs under the guarantee of Lemma~\ref{lem:bt-accuracy}, with probability $1-\beta$, \texttt{BetweenThresholds} outputs $\top$ at most $d+1$ times.
\end{claim}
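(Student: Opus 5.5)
The plan is a dimension-counting argument in the parameter (dual) space $\mathbb{R}^{d+1}$ of pairs $(a,w)$. There, $\mathcal{C}|_{HardQuery}$ is an affine subspace: the intersection of the hyperplanes $\{(a,w):\langle a,x_{j^*}\rangle=w\}$, one for each recorded hard query. We track $D_j:=\dim \mathcal{C}|_{HardQuery}$ at round $j$, which starts at $d+1$. The claim follows once we show that every time \texttt{BetweenThresholds} outputs $\top$, $D_j$ drops by at least one, and that once $D_j$ reaches its minimum no further $\top$ can occur.

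\textbf{Step 1 (a hard query cuts the subspace).} Condition on the event of Lemma~\ref{lem:bt-accuracy}, which has probability $1-\beta$. When the output is $\top$ at round $a^*$, we have $t_\ell-\alpha\le q_{x_{a^*}}(F_{a^*})\le t_u+\alpha$. For $\alpha<3/8$ this gives two indices $i,i'$ with $f_{i,a^*}(x_{a^*})=1$ and $f_{i',a^*}(x_{a^*})=-1$. Both points lie in the current affine subspace $A:=\mathcal{C}|_{HardQuery}$. The linear functional $\phi(a,w)=\langle a,x_{a^*}\rangle-w$ satisfies $\phi\ge0$ at one of them and $\phi<0$ at the other, so $\phi$ is not constant on $A$. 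Hence the new hyperplane $\{\phi=0\}$ does not contain $A$. Since the previous claim shows the intersection is non-empty, it is a proper affine subspace of $A$ of dimension exactly $\dim A-1$.

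\textbf{Step 2 (bounded number of drops).} The dimension starts at $d+1$ and is non-negative whenever the set is non-empty, which the preceding claim guarantees. So after at most $d+1$ hard queries $A$ is a single point. (If we quotient by positive scaling, the natural count is one fewer.)

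\textbf{Step 3 (no further $\top$).} Once $A$ is a single point $p$, every generator outputs $f_{i,j}=p$ for all $i$ and all later $j$. Then $q_{x_j}(F_j)\in\{0,1\}$. By Lemma~\ref{lem:bt-accuracy}, a $\top$ would require $q\in[t_\ell-\alpha,t_u+\alpha]\subseteq(0,1)$ when $\alpha<3/8$. So no $\top$ is output again, and the total number of $\top$ outputs is at most $d+1$. Everything holds on the single probability-$(1-\beta)$ event, and adaptivity of the queries is irrelevant because the argument is deterministic given that event.

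\textbf{Main obstacle.} The delicate point is Step 1. We must ensure that the two disagreeing hypotheses genuinely lie in the current affine subspace, that is, that the $\arg\max$ is taken inside $\mathcal{C}|_{HardQuery}$, and that the boundary convention $\langle a,x\rangle\ge w$ does not let both points sit on the hyperplane. A point with $\phi=0$ is labeled $1$, so the point labeled $-1$ has $\phi<0$ strictly. That strict inequality is exactly what rules out $A\subseteq\{\phi=0\}$, and it is what I would check first.
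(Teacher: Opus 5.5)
Your proof is correct and is essentially the paper's argument: the hard-query hyperplanes successively cut the subspace $\mathcal{C}|_{HardQuery}\subseteq\mathbb{R}^{d+1}$, every $\top$ forces a non-redundant cut, and so there can be at most $d+1$ of them. Your Step 1 (two disagreeing generated hypotheses inside $A$, with the one labeled $-1$ lying strictly off the new hyperplane) supplies the justification the paper only asserts. The only thing to discard is the parenthetical about quotienting by positive scaling. A one-dimensional $A=\{tp: t\in\mathbb{R}\}$ still contains $p$, $-p$ and $0$, which induce different labelings, so the last cut can genuinely occur and the bound does not drop to $d$.
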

\begin{proof}
    Notice that we maintain the feasible set as the intersection of the hyperplanes induced by the hard queries. Consider a new query $x$. If the hyperplane induced by $x$ is redundant with the existing constraints (e.g., it is parallel to, or coincides with, one of the previously imposed hyperplanes in the relevant feasibility representation), then the feasible region lies entirely on one side of the decision boundary for $x$. In this case, all feasible hypotheses agree on the label of $x$, so the ensemble vote is far from the threshold and \texttt{BetweenThresholds} will not output $\top$.

    More generally, \texttt{BetweenThresholds} can output $\top$ only when the new hard query induces a \emph{non-redundant} hyperplane constraint, i.e., one that strictly shrinks the feasible region. Each such non-redundant constraint decreases the affine dimension of the feasible region by at least one. Since the feasible region lives in $\mathbb{R}^{d+1}$, this can happen at most $d+1$ times. Once the affine dimension drops to $0$, the feasible region collapses to (at most) a single point, and hence all remaining hypotheses agree on every future query; therefore \texttt{BetweenThresholds} will never output $\top$ again.

\end{proof}

Let $v=d+1$, we have the privacy guarantee of the predictor for halfspace class with an adaptive adversary .

\begin{corollary}[Privacy Guarantee]
    Using $G_{halfspace}$, Algorithm~\ref{alg:generic predictor} is $(O(\sqrt{d\cdot\ln(1/\delta)}\cdot \varepsilon),O(d\cdot\delta))$-differentially private
\end{corollary}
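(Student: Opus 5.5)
This corollary should follow by plugging the stopping bound into the generic privacy guarantee, with one care point about the failure event. First, I apply Claim~\ref{clm: halfspace stop in d times}. Under the accuracy guarantee of Lemma~\ref{lem:bt-accuracy}, it shows that \texttt{BetweenThresholds} in Step~\ref{step:generic betweenthresholds} outputs $\top$ at most $v=d+1$ times when the hypotheses $f_{i,j}$ are produced by $G_{halfspace}$. Then I invoke Claim~\ref{clm:generic privacy} with $v=d+1$. This gives $\left(O(\sqrt{(d+1)\ln(1/\delta)}\cdot\varepsilon),O((d+1)\delta)\right)$-differential privacy, and since $d+1=O(d)$ this is exactly the stated bound.

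The justification for Claim~\ref{clm:generic privacy} is the observation that Algorithm~\ref{alg:generic predictor} is a $v$-fold composition of \texttt{BetweenThresholds} instances. Each instance runs on the dataset $F_j$ of $k$ hypotheses, and changing one example of $S$ changes only one block $S_i$. That block change alters only the $i$-th hypothesis $f_{i,j}$ in every round, so every counting query $q_{x_j}$ has sensitivity $1/k$. The transcript fed to $G_{halfspace}$ consists only of previously released outputs, so it is post-processing. Adaptivity of the adversary is allowed because Lemma~\ref{lem:bt-privacy} holds for adaptively chosen queries, and advanced composition (Theorem~\ref{thm:advancedcomposition}) also holds adaptively. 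Both $k$ and the threshold gap $t_u-t_\ell=1/4$ satisfy Lemma~\ref{lem:bt-privacy} by the choice $k=\frac{64}{\varepsilon}(\log(T+1)+\log(1/\beta))$, up to the $\log(1/\delta)$ term absorbed in constants.

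The main obstacle is that the bound on $v$ holds only under the accuracy event of Lemma~\ref{lem:bt-accuracy}, and that event depends on the data, while privacy must hold for all neighboring pairs. To handle this, I would modify the algorithm so that it halts and outputs a fixed symbol once $\top$ has been output $d+1$ times. This capped algorithm is unconditionally a composition of at most $d+1$ \texttt{BetweenThresholds} instances, so the privacy bound holds for it with no conditioning. Under the accuracy event the cap is never reached, so the capped algorithm coincides with the original. If one insists on the uncapped algorithm, the failure probability $O(v\beta)$ must be folded into $\delta$. Since $\beta$ can be chosen at most $\delta$ at only logarithmic cost in $k$, the additive term stays $O(d\delta)$.
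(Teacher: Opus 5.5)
Your argument is the paper's own: you take $v=d+1$ from Claim~\ref{clm: halfspace stop in d times} and plug it into Claim~\ref{clm:generic privacy}. Your extra care about this bound on $v$ holding only on the data-dependent accuracy event (capping the algorithm, or folding the failure probability into $\delta$) adds rigor the paper omits.

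There are two small slips in that extra material, and neither changes the $O(d)$ bound. First, halting at the $(d+1)$-th $\top$ can happen on the accuracy event: there may be exactly $d+1$ hard queries, after which the original algorithm keeps answering $L/R$ with a fresh \texttt{BetweenThresholds} instance. So the capped and original algorithms need not coincide, and you should cap at the $(d+2)$-th $\top$ instead. Second, the $\log(10/\varepsilon)+\log(1/\delta)$ term in Lemma~\ref{lem:bt-privacy} is not absorbed into constants by $k=\frac{64}{\varepsilon}(\log(T+1)+\log(1/\beta))$. It requires taking $\beta$ polynomially small in $\varepsilon\delta$.
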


\begin{claim}~\label{clm: cdepth of fi}
    Under the guarantee of Claim~\ref{clm:approximation of cdepth} and Claim~\ref{clm: halfspace stop in d times}, all $f_{i,j}$ satisfies $\frac{cdepth_S(f_{i,j})}{N}\geq 1-(2d+3)\alpha$
\end{claim}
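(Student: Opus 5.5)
We prove the bound by induction on $|HardQuery|$, in parallel with the proof of Claim~\ref{clm: all fi are good}. The invariant is stated in block-normalized form: if $|HardQuery|=c$, then
\[
\max_{h\in\mathcal{C}|_{HardQuery}}\frac{cdepth_{S_i}(h)}{m}\ \ge\ 1-2c\alpha \qquad\text{for every } i\in[k].
\]
By Claim~\ref{clm: halfspace stop in d times} we have $c\le d+1$. Since $f_{i,j}$ is the $cdepth_{S_i}$-maximizer over $\mathcal{C}|_{HardQuery}$, the invariant gives $cdepth_{S_i}(f_{i,j})/m\ge 1-2(d+1)\alpha$. The first statement of Claim~\ref{clm:approximation of cdepth} then converts this to $cdepth_S(f_{i,j})/N\ge 1-2(d+1)\alpha-\alpha=1-(2d+3)\alpha$, which is the claimed bound.

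\emph{Base case} ($c=0$). The sample is realizable, so the target halfspace satisfies every constraint of $S_i$. Its depth is therefore $m$, and hence its $cdepth$ is also $m$, so the invariant holds with error $0$.

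\emph{Inductive step.} Let $x_{c^*}$ be the $c$-th hard query. At round $c^*$ all $f_{i,c^*}$ were chosen from $\mathcal{C}|_{HardQuery'}$, where $HardQuery'$ denotes the first $c-1$ hard queries. By the induction hypothesis each of them satisfies $cdepth_{S_i}(f_{i,c^*})/m\ge 1-2(c-1)\alpha$, measured on its own block. Because \texttt{BetweenThresholds} returned $\top$, Lemma~\ref{lem:bt-accuracy} guarantees indices $i,i'$ such that $f_{i,c^*}$ and $f_{i',c^*}$ label $x_{c^*}$ differently. In the dual parameter space the points $f_{i,c^*}$ and $f_{i',c^*}$ then lie on opposite closed sides of the hyperplane $\{(a,w):\langle a,x_{c^*}\rangle=w\}$. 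Both lie in the affine subspace $\mathcal{C}|_{HardQuery'}$, so the segment between them meets the hyperplane at a point $p$, and $p$ belongs to $\mathcal{C}|_{HardQuery}$.

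It remains to lower-bound $cdepth_{S_\ell}(p)$ for an arbitrary block $\ell$. Here the convexification does the work: $p$ lies in the convex hull of $\{f_{i,c^*},f_{i',c^*}\}$, so by definition of $f_{Conv}$,
\[
cdepth_{S_\ell}(p)\ \ge\ \min\{cdepth_{S_\ell}(f_{i,c^*}),\,cdepth_{S_\ell}(f_{i',c^*})\}.
\]
The $cdepth$ values of $f_{i,c^*}$ and $f_{i',c^*}$ are controlled on blocks $S_i$ and $S_{i'}$, not on $S_\ell$. We transfer them using the second statement of Claim~\ref{clm:approximation of cdepth}, at a cost of $2\alpha$. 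This gives $cdepth_{S_\ell}(p)/m\ge 1-2(c-1)\alpha-2\alpha=1-2c\alpha$. The maximizer over $\mathcal{C}|_{HardQuery}$ does at least as well as $p$, so the invariant holds for $c$.

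Between hard queries $HardQuery$ does not change, so the invariant persists for all later rounds $j$ until the next hard query.

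The main obstacle is the convexity step. We need $cdepth_{S_\ell}(p)\ge\min$ of the endpoint values, and this is exactly why the algorithm uses $cdepth$ rather than $depth$, since $depth$ can drop in between. We must also apply the definition of $f_{Conv}$ carefully: the superlevel set $\mathcal{D}_{S_\ell}(y)$ at $y=\min\{\cdots\}$ contains the convex hulls of the level sets of both endpoints, so $p$ lies in its convex hull. A secondary point is verifying that $p$ lies in $\mathcal{C}|_{HardQuery'}$, which is immediate because that set is an affine subspace.
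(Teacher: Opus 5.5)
Your proof is correct and follows the paper's argument. It uses the same block-normalized invariant $1-2c\alpha$, the same point $p$ where the segment between two disagreeing hypotheses meets the query hyperplane, the same convexity property of $cdepth$, and the same final transfer to $S$ giving $1-(2d+3)\alpha$. The only difference is cosmetic: the paper applies the convexity bound to $cdepth_S$ and pays $\alpha$ twice through the first statement of Claim~\ref{clm:approximation of cdepth}, while you apply it directly to $cdepth_{S_\ell}$ and pay the same $2\alpha$ through the second statement.
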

\begin{proof}  
    We prove the stronger inductive statement: if $|HardQuery|=c$, then for every $i\in[k]$, we have $\frac{\max_{h\in \mathcal{C}|_{HardQuery}}[cdepth_{S_i}(h)]}{m} \geq 1-2c\alpha$. Since $G_{halfspace}$ outputs a maximizer of $cdepth_{S_i}$ over the restricted set, this implies $\frac{cdepth_{S_i}(f_{i,j})}{m}\ge 1-2c\alpha$, and then Claim~\ref{clm:approximation of cdepth} transfers the bound from $S_i$ to $S$.

    In the base case of $c=0$, we optimize over $\mathcal{C}$, and there exists a feasible hypothesis achieving $cdepth_{S_i}(h)=m$. Hence the claim holds.

    Assume the statement holds for $c-1$, and consider the first time $|HardQuery|=c$. Let $(x_{t^\star},\top,\mathsf{Label}_{t^\star})$ be the $c$-th hard query added. At round $t^\star$, \texttt{BetweenThresholds} outputs $\top$. Under Lemma~\ref{lem:bt-accuracy}, this implies that the ensemble is uncertain on $x_{t^\star}$, and in particular there exist two generated hypotheses $f_{i^\star,t^\star}$ and $f_{i^{\star\star},t^\star}$ such that $f_{i^\star,t^\star}(x_{t^\star})=\mathsf{Label}_{t^\star}$ and $f_{i^{\star\star},t^\star}(x_{t^\star})=-\mathsf{Label}_{t^\star}$.

    In the associated feasibility space, these correspond to two feasible points lying on opposite sides of the hyperplane $H_{x_{t^\star}}$ induced by the query. Let $p_{\mathsf{good}}$ be an intersection point of $H_{x_{t^\star}}$ with the segment joining these two feasible points. Then $p_{\mathsf{good}}$ satisfies all constraints in $\mathsf{HardQuery}\setminus\{(x_{t^\star},\top,\mathsf{Label}_{t^\star})\}$ and also lies on $H_{x_{t^\star}}$, hence $p_{\mathsf{good}}\in \mathcal{C}|_{HardQuery}$.

    By the convexity property of $cdepth$ in the feasibility representation, we have 
    $$
    \begin{array}{rl}
         \frac{cdepth_{S}(p_{good})}{N}
         &\geq \min\{\frac{cdepth_{S}(f_{i^\star,t^\star})}{N},\frac{cdepth_{S}(f_{i^{\star\star},t^\star})}{N}\}  \\
         &\geq \min\{\frac{cdepth_{S_{i^\star}}(f_{i^\star,t^\star})}{m}-\alpha,\frac{cdepth_{S_{i^{\star\star}}}(f_{i^{\star\star},t^\star})}{m}-\alpha \}\\
         &\geq 1-(2c-1)\alpha  
    \end{array}
    $$
    It implies $\frac{\max_{h\in\mathcal{C}|_{HardQuery}}[cdepth_{S_i}(h)]}{m}\geq \frac{cdepth_{S}(p_{good})}{N}-\alpha\geq1-2c\alpha$ for all $i\in[k]$.
     At the end, this claim can result from $c\leq d+1$.

\end{proof}

Using Fact~\ref{fact:cdepth to depth}, we have the following corollary:
\begin{corollary}
    Under the guarantee of Claim~\ref{clm:approximation of cdepth} and Claim~\ref{clm: halfspace stop in d times}, all $f_{i,j}$ satisfies $depth_S(f_{i,j})\geq (1-O(d^2\alpha))N$
\end{corollary}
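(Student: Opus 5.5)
The corollary should follow by plugging the $cdepth$ bound of Claim~\ref{clm: cdepth of fi} into Fact~\ref{fact:cdepth to depth}. One point needs care: the dimension in the Fact. The hypotheses $f_{i,j}$ are points $(a_1,\dots,a_d,w)$ in the parameter space of the linear-feasibility problem. Each labeled example of $S$ becomes a constraint $h_{y\cdot(x,-1),0}$ over $d+1$ variables. So the Fact is applied in dimension $d+1$:
$$
depth_S(p)\;\geq\;(d+2)\cdot cdepth_S(p)-(d+1)N .
$$
Here $S$ is read as its collection of $N$ constraints, and $p=f_{i,j}$.

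The steps are as follows.
\begin{enumerate}
\item Condition on the events of Claim~\ref{clm:approximation of cdepth} and Claim~\ref{clm: halfspace stop in d times}. Then Claim~\ref{clm: cdepth of fi} gives $cdepth_S(f_{i,j})\geq (1-(2d+3)\alpha)N$ for every $i,j$.
\item Substitute this into the inequality above:
$$
depth_S(f_{i,j})\geq (d+2)(1-(2d+3)\alpha)N-(d+1)N = N-(d+2)(2d+3)\alpha N .
$$
\item Since $(d+2)(2d+3)=O(d^2)$, this gives $depth_S(f_{i,j})\geq(1-O(d^2\alpha))N$, which is the claim.
\end{enumerate}
The bound holds for all $f_{i,j}$ simultaneously, because both ingredients are uniform over $i$ and $j$.

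The only potential obstacle is justifying that the Fact applies to the chosen point. The maximizer $f_{i,j}$ of $cdepth_{S_i}$ over the affine subspace $\mathcal{C}|_{HardQuery}$ is still a point of $\mathbb{R}^{d+1}$. Moreover, $depth_S$ and $cdepth_S$ are defined for arbitrary points, with no reference to the restriction. So the Fact, which holds for every point, applies directly, and no argument about the restricted subspace is needed.

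For interpretation: under the reduction, $depth_S(f_{i,j})$ counts the examples that $f_{i,j}$ labels correctly. Hence $\mathrm{err}_S(f_{i,j})\leq O(d^2\alpha)$, which is the form needed to invoke Theorem~\ref{thm:learn vc} afterwards.
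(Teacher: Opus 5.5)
Your proposal is correct and follows the paper's own route: combine $cdepth_S(f_{i,j})\geq(1-(2d+3)\alpha)N$ from Claim~\ref{clm: cdepth of fi} with Fact~\ref{fact:cdepth to depth}. Applying the Fact in the $(d+1)$-dimensional parameter space, which gives a loss of $(d+2)(2d+3)\alpha N$ rather than $(d+1)(2d+3)\alpha N$, is a careful refinement and leaves the $O(d^2\alpha)$ conclusion unchanged.
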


By Theorem~\ref{thm:learn vc}, it implies

\begin{claim}~\label{clm: halfspace all fi are good}
    Under the guarantee of Claim~\ref{clm:approximation of cdepth} and Claim~\ref{clm: halfspace stop in d times}, all $f_{i,j}$ are  $O(d^2\cdot\alpha)$-good with respect to the distribution $\mathcal{D}$.
\end{claim}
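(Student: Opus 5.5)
The plan is to chain three earlier results: the preceding corollary (high depth on $S$), the reduction from halfspace learning to linear feasibility, and Theorem~\ref{thm:learn vc} (generalization). Fix $i\in[k]$ and a round $j$, and let $f_{i,j}=(a,w)$ be the output of $G_{halfspace}$. By the preceding corollary, $depth_S(f_{i,j})\geq (1-O(d^2\alpha))N$. That corollary follows from Claim~\ref{clm: cdepth of fi} together with Fact~\ref{fact:cdepth to depth} applied in dimension $d+1$: $depth\geq (d+2)\,cdepth-(d+1)N\geq (1-(d+2)(2d+3)\alpha)N$. Under the reduction, every labeled point $(x,y)\in S$ becomes the constraint $h_{y\cdot(x,-1),0}$. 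The parameter point $(a,w)$ satisfies this constraint exactly when $y(\langle a,x\rangle-w)\geq 0$, that is, when the halfspace $h_{a,w}$ labels $x$ correctly (up to the boundary convention discussed below). Hence $depth_S(f_{i,j})$ counts the points of $S$ that $f_{i,j}$ classifies correctly, and so $\mathrm{err}_S(f_{i,j})\leq O(d^2\alpha)$.

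Next I transfer empirical error to true error with Theorem~\ref{thm:learn vc}, applied to the class $\mathrm{HALFSPACE}_d$, whose VC dimension is $d+1$. Set $\alpha'=10\cdot O(d^2\alpha)$. Provided $N\geq \frac{48}{\alpha'}\bigl(10(d+1)\log\frac{48e}{\alpha'}+\log\frac{5}{\beta}\bigr)$, with probability $1-\beta$ every $h$ with $\mathrm{err}_S(h)\leq\alpha'/10$ has $\mathrm{err}_{\mathcal{D}}(h)<\alpha'=O(d^2\alpha)$. This condition is already implied by $N=km$ and the lower bound on $m$ from Claim~\ref{clm:approximation of cdepth}. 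The statement is uniform over all hypotheses in the class. Therefore it applies simultaneously to every $f_{i,j}$, even though the $f_{i,j}$ depend on $S$ and on the adaptively chosen transcript. No union bound over $i,j$ or over $T$ is needed.

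The main obstacle is making the correspondence between depth and error exact. First, the constraint uses a non-strict inequality, so a point $x$ lying exactly on the boundary $\langle a,x\rangle=w$ satisfies both $h_{(x,-1),0}$ and $h_{-(x,-1),0}$. After we intersect with hard-query hyperplanes, the maximizer may well lie on such boundaries, so depth may overcount correctly labeled points. I would handle this by treating boundary points as errors and arguing that they are already excluded from the depth bound; if that fails, I would perturb $w$ slightly, which changes only the labels of points on the boundary, or I would simply absorb these points, since the paper's reduction claim already asserts that solving linear feasibility yields an $O(\alpha)$-learner. Second, $f_{i,j}$ must be a genuine element of $\mathrm{HALFSPACE}_d$ for Theorem~\ref{thm:learn vc} to apply. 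This holds because every point $(a,w)\in\mathbb{R}^{d+1}$ defines one, and the degenerate case $a=0$ is also a valid, if constant, halfspace.
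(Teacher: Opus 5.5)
Your proposal is the paper's own proof: it gets $depth_S(f_{i,j})\ge(1-O(d^2\alpha))N$ from Claim~\ref{clm: cdepth of fi} and Fact~\ref{fact:cdepth to depth}, then applies Theorem~\ref{thm:learn vc} through the halfspace-to-feasibility reduction, whose uniformity over $\mathcal{C}$ covers the data-dependent $f_{i,j}$; your dimension-$(d+1)$ bookkeeping, giving $1-(d+2)(2d+3)\alpha$, is the correct instantiation. The boundary issue you raise is real and the paper does not address it either; perturbing $w$ would not fix it, since that changes the hypothesis the predictor actually uses and only moves the miscount onto positive boundary points (e.g.\ $(a,w)=(0,0)$ satisfies every constraint $h_{y\cdot(x,-1),0}$ and every hard-query hyperplane, so it has full depth, yet $h_{0,0}\equiv 1$), so your argument, like the paper's, rests on the asserted reduction, which holds only if satisfying a constraint forces correct classification, e.g.\ with margin constraints $y(\langle a,x\rangle-w)\ge 1$.
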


 Using Claim~\ref{clm:generic accuracy}, we have the accuracy guarantee:
\begin{corollary} [Accuracy Guarantee]
    Using $G_{halfspace}$, with probability $1-O((d+k)\cdot\beta)$, all queries are predicted by a predictor that is $O(d^2\cdot\alpha)$-good with respect to the distribution $\mathcal{D}$.
\end{corollary}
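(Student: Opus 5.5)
The plan is to combine the accuracy guarantee of the generic framework (Claim~\ref{clm:generic accuracy}) with the per-hypothesis guarantee for the halfspace generator (Claim~\ref{clm: halfspace all fi are good}). The failure probability will be collected by a union bound over all the events these claims condition on.

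\textbf{Step 1: good events.} I will isolate three events.
\begin{itemize}
\item $E_1$: the conclusion of Claim~\ref{clm:approximation of cdepth} holds for every block $S_i$. This fails with probability at most $k\beta$.
\item $E_2$: \texttt{BetweenThresholds} meets the accuracy guarantee of Lemma~\ref{lem:bt-accuracy} in each of its (re)initializations. Since $k=\frac{64}{\varepsilon}(\log(T+1)+\log(1/\beta))$ satisfies the hypothesis of that lemma, each run fails with probability at most $\beta$.
\item $E_3$: the event of Claim~\ref{clm: halfspace stop in d times}, namely that $\top$ is output at most $v=d+1$ times. It fails with probability at most $\beta$.
\end{itemize}
Because the number of restarts equals the number of $\top$ outputs, on $E_3$ there are at most $d+2$ runs. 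A union bound therefore bounds the total failure probability of $E_2$ by $O(d\beta)$.

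\textbf{Step 2: every $f_{i,j}$ is good.} On $E_1\cap E_2\cap E_3$, Claim~\ref{clm: cdepth of fi} and the corollary after it give $depth_S(f_{i,j})\ge (1-O(d^2\alpha))N$. Under the linear-feasibility reduction, this says every $f_{i,j}$ has empirical error $O(d^2\alpha)$ on $S$. Theorem~\ref{thm:learn vc} then upgrades this to $\mathrm{err}_{\mathcal D}(f_{i,j})\le O(d^2\alpha)$; this is exactly Claim~\ref{clm: halfspace all fi are good}. Applying Theorem~\ref{thm:learn vc} with accuracy parameter $\Theta(d^2\alpha)$ costs one more $\beta$, and the sample size $N$ must be large enough for it, which is absorbed into the parameter choice.

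\textbf{Step 3: the framework.} With all $f_{i,j}$ being $O(d^2\alpha)$-good, I will invoke Claim~\ref{clm:generic accuracy} with $\alpha$ replaced by $O(d^2\alpha)$. It gives that, with probability $1-v\beta=1-O(d\beta)$, every predictor $h_j$ is $4\cdot O(d^2\alpha)=O(d^2\alpha)$-good. Adding up the failure probabilities from Steps 1–3 gives $k\beta+O(d\beta)+\beta=O((d+k)\beta)$, which is the claimed bound.

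\textbf{Main obstacle.} The delicate point is conditioning, because the adversary is adaptive. Claim~\ref{clm:generic accuracy} and the inductive Claim~\ref{clm: cdepth of fi} both depend on the transcript, and $E_1$ must hold simultaneously for all hypotheses the adversary could steer us toward. This is why I would use the uniform, for-all-$h$ form of Claim~\ref{clm:approximation of cdepth}. The random partition is independent of the queries, so uniform convergence over all $h\in\mathcal C$ makes the choice of $f_{i,j}$ irrelevant. Similarly, Lemma~\ref{lem:bt-accuracy} already allows adaptively chosen queries. I would check both points explicitly before running the union bound.
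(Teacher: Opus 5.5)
Your proposal is correct and follows the paper's route. The paper also obtains this corollary by feeding Claim~\ref{clm: halfspace all fi are good} (via Claim~\ref{clm: cdepth of fi}, Fact~\ref{fact:cdepth to depth} and Theorem~\ref{thm:learn vc}) into Claim~\ref{clm:generic accuracy}, and by union-bounding the $k\beta$ failure of Claim~\ref{clm:approximation of cdepth}, the $\beta$ failure of Claim~\ref{clm: halfspace stop in d times}, and the $O(d\beta)$ failure of the \texttt{BetweenThresholds} runs. Your bookkeeping is just more explicit: double-counting $E_2$ against the $v\beta$ in Claim~\ref{clm:generic accuracy} is harmless, and the mild circularity of bounding the number of runs via $E_3$ disappears if you union-bound over only the first $d+2$ runs.
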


Select the appropriate parameters, and we can have the main result. We put the details in Appendix~\ref{appendix:halfspace}

\begin{theorem}
 For halfspaces class $\mathcal{C}$ over domain $\mathbb{R}^d$, there exists an $(\varepsilon,\delta)$-differentially private algorithm that can $(\alpha,\beta)$ predict $T$ queries if the give dataset has size 
 $$
 \begin{array}{rl}
     N&=\tilde{O}_{\alpha,\beta,\varepsilon,\delta}\left(d^{5.5}\cdot \log T\right)
 \end{array}
 $$

\end{theorem}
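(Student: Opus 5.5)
The plan is a parameter-setting argument on top of the pieces already established: the generic framework (Algorithm~\ref{alg:generic predictor}) run with $G_{halfspace}$, the privacy corollary with $v=d+1$, and the accuracy corollary giving $O(d^2\alpha_0)$-good predictors. I will run the construction with internal parameters $(\varepsilon_0,\delta_0,\alpha_0,\beta_0)$, pick them as functions of the target $(\varepsilon,\delta,\alpha,\beta)$ and $d$, and then read off $N=k\cdot m$.

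\textbf{Privacy.} By Claim~\ref{clm: halfspace stop in d times}, \texttt{BetweenThresholds} halts at most $v=d+1$ times. Claim~\ref{clm:generic privacy} then makes the transcript $(O(\sqrt{d\ln(1/\delta_0)}\,\varepsilon_0),O(d\delta_0))$-DP. I set $\delta_0=\delta/(cd)$ and $\varepsilon_0=\varepsilon/(c\sqrt{d\ln(d/\delta)})$. I also need the threshold gap condition of Lemma~\ref{lem:bt-privacy}, with sensitivity $1/k$ as noted in the framework. The gap is $t_u-t_\ell=1/4$, so it requires $k\ge \frac{48}{\varepsilon_0}(\log(10/\varepsilon_0)+\log(1/\delta_0)+1)$. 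The accuracy condition of Lemma~\ref{lem:bt-accuracy}, with $\alpha=1/8$, is $k\ge\frac{64}{\varepsilon_0}(\log(T+1)+\log(1/\beta_0))$. Both are met by
\[
k=\tilde O\!\left(\sqrt d\,\big(\log T+\log(1/\beta)+\log(1/\delta)\big)/\varepsilon\right).
\]

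\textbf{Accuracy.} To get final error $\alpha$, I choose $\alpha_0=\Theta(\alpha/d^2)$, which is small enough that Fact~\ref{fact:cdepth to depth} turns $cdepth\ge(1-(2d+3)\alpha_0)N$ into depth $(1-O(d^2\alpha_0))N$. Theorem~\ref{thm:learn vc} then requires empirical error at most $\alpha/10$, which fixes the constant. The block size $m$ must meet two requirements:
\begin{itemize}
\item[(i)] Lemma~\ref{lem:vc for cdepth} / Claim~\ref{clm:approximation of cdepth} with accuracy $\alpha_0$ and failure $\beta_0/k$, which gives $m=O\big((d\log(d/\alpha_0)+\log(k/\beta_0))/\alpha_0^2\big)=\tilde O(d^5/\alpha^2)$;
\item[(ii)] the PAC-sample requirement of Theorem~\ref{thm:learn vc} for the full $S$, which is dominated by (i).
\end{itemize}
I set $\beta_0=\beta/(c(d+k))$ so that the failure probability $O((d+k)\beta_0)$ in the accuracy corollary is at most $\beta$.

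\textbf{Sample size.} Multiplying the two choices,
\[
N=k\cdot m=\tilde O\!\left(\frac{\sqrt d\,\log T}{\varepsilon}\cdot\frac{d^5}{\alpha^2}\right)=\tilde O_{\alpha,\beta,\varepsilon,\delta}\!\left(d^{5.5}\log T\right).
\]
The main obstacle I expect is the circular dependence among the parameters: $m$ depends on $\log(k/\beta_0)$, $\beta_0$ depends on $k$, and $k$ depends on $\log(1/\beta_0)$. I would resolve this by noting that each dependence is only logarithmic, so a fixed-point choice exists and only adds polylogarithmic factors in $d,\log T,1/\alpha,1/\beta,1/\delta$, all of which are absorbed into $\tilde O$. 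It also needs checking that $\log T$ enters only through $k$ and only linearly, not through $m$.
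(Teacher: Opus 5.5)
Your proposal is correct and matches the paper's own proof, which is exactly this parameter-setting argument: $\varepsilon_{BT}=\varepsilon/\sqrt{d\ln(d/\delta)}$, $\delta_{BT}=\delta/d$, $\alpha_{BT}=\alpha/d^2$, and a $\beta_{BT}$ carrying polylogarithmic factors, which give $k=\tilde O(\sqrt d\log T)$, $m=\tilde O(d^5)$ and $N=km$. The differences are cosmetic. You explicitly check the threshold-gap condition of Lemma~\ref{lem:bt-privacy} and resolve the $k$--$\beta_0$ circularity by a logarithmic fixed-point argument, whereas the paper writes those polylog factors directly into $\beta_{BT}$ and likewise absorbs the resulting $\log\log T$ term in $m$.
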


\bibliographystyle{plain}
\bibliography{refs}

\appendix
\section{Appendix}
\subsection{Claims on Generic Predictor}\label{appendix:generic predictor}
\begin{claim}[Privacy of Generic Predictor~\cite{NNSY23}, restate of Claim~\ref{clm:generic privacy}]
    If \texttt{BetweenThresholds} in Step~\ref{step:generic betweenthresholds} outputs $\top$ at most $v$ times, then Algorithm~\ref{alg:generic predictor} is $\left(O(\sqrt{v\cdot\ln(1/\delta)}\cdot\varepsilon),O(v\delta)\right)$-differentially private.
\end{claim}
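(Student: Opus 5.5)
We prove the claim by viewing Algorithm~\ref{alg:generic predictor} as an adaptive composition of at most $v$ runs of \texttt{BetweenThresholds}, and then invoking Lemma~\ref{lem:bt-privacy} and Theorem~\ref{thm:advancedcomposition}.

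\emph{Step 1: reduction to a sensitivity-$1/k$ database.} Fix neighboring samples $S,S'$ differing in one labeled example. Couple the random partitions so that $S_i=S'_i$ for all but one index $i_0$. Condition on the transcript prefix. The generator $G(S_i,transcript_j)$ depends only on $S_i$ and on public information, so $F_j$ and $F'_j$ differ in at most the single entry $f_{i_0,j}$. Hence each counting query $q_{x_j}$ has sensitivity $1/k$ as a function of the ``database'' $F_j$.

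The point needing care is that $F_j$ changes over time, since it depends on the transcript. We therefore regard the data as the fixed tuple $(S_1,\dots,S_k)$ and each query as the adaptively chosen function
\[
(S_1,\dots,S_k)\mapsto \tfrac12\Bigl(1+\tfrac1k\textstyle\sum_i G(S_i,transcript_j)(x_j)\Bigr).
\]
This function is determined by public history and has sensitivity $1/k$ with respect to changing one block. Lemma~\ref{lem:bt-privacy} allows adaptively chosen queries, so it applies to these functions.

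\emph{Step 2: check the threshold gap.} The gap condition is $t_u-t_\ell=1/4\ge \frac{12}{\varepsilon k}(\log(10/\varepsilon)+\log(1/\delta)+1)$. We ensure it by taking $k$ large enough, which may require enlarging $k$ to $\Omega(\log(1/\delta)/\varepsilon)$. Then each instance of \texttt{BetweenThresholds}, run from a fresh initialization until it outputs $\top$, is $(\varepsilon,\delta)$-DP with respect to the blocks.

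\emph{Step 3: handle the outputs and compose.} The random label output on $\top$ uses independent coins, so it is post-processing. The labels output on $L$ and $R$ are deterministic functions of the \texttt{BetweenThresholds} outputs. By the Observation, the whole transcript is an adaptive composition of at most $v$ such segments. If the last segment never halts, it is still a prefix of a single \texttt{BetweenThresholds} run and is covered by the same guarantee.

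Advanced composition (Theorem~\ref{thm:advancedcomposition}) with $k=v$ and $\delta'=\delta$ then gives $(O(\sqrt{v\ln(1/\delta)}\,\varepsilon+v\varepsilon^2),\,(v+1)\delta)$-DP. The $v\varepsilon^2$ term is absorbed when $\varepsilon\le 1/\sqrt v$.

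Finally, we remove the coupling of partitions. A uniformly random partition of $S$ and of $S'$ can be coupled so that the differing element lands in the same position, so the privacy bound holds for the mixture as well.

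\emph{Main obstacle.} The subtle point is that the bound $v$ on the number of $\top$ outputs is itself a random event, and it may depend on the data. This is the standard issue with composing a random number of mechanisms. We plan to handle it by running the composition for exactly $v$ segments, with the algorithm stopping (or answering arbitrarily) after the $v$-th $\top$. Under the hypothesis of the claim, this truncated algorithm coincides with the original, so the composition bound for exactly $v$ segments applies. Any failure probability of the hypothesis can be folded into $\delta$.
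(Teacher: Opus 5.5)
Your proposal is correct and takes the paper's route: the paper's entire proof is the remark that changing one example changes a single $f_{i,j}$, followed by advanced composition over the runs of \texttt{BetweenThresholds}. You also make explicit what the paper skips (the block-tuple database with adaptively chosen sensitivity-$1/k$ queries, the partition coupling, the threshold-gap lower bound on $k$, the condition $\varepsilon\le 1/\sqrt{v}$, and truncation to handle the data-dependent count of $\top$ outputs), with one off-by-one to fix: at most $v$ outputs of $\top$ allow up to $v+1$ segments (the last one non-halting), so truncate at the $(v+1)$-st $\top$ rather than the $v$-th, which changes nothing asymptotically.
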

\begin{proof}
    Notice that one different point makes one $f_{i,j}$ different. This claim can result from advanced composition (Theorem~\ref{thm:advancedcomposition}).
\end{proof}

\begin{claim}[Accuracy of Generic Predictor~\cite{NNSY23}, restate of Claim~\ref{clm:generic accuracy}]
    If all $f_{i,j}$ are $\alpha$-good with respect to the underlying distribution $\mathcal{D}$, then with probability $1-v\beta$, all predictors defined above are $4\alpha$-good with respect to the distribution $\mathcal{D}$.
\end{claim}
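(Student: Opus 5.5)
We need to prove: if all $f_{i,j}$ are $\alpha$-good, then with probability $1-v\beta$, every predictor $h_j$ is $4\alpha$-good with respect to $\mathcal{D}$.

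\textbf{Step 1: where $h_j$ can err.} Fix a round $j$ and write $h_j$ for the predictor induced by the noisy majority vote. By Lemma~\ref{lem:bt-accuracy}, with probability $1-\beta$ the following hold in one run of \texttt{BetweenThresholds}, i.e.\ between two consecutive $\top$'s. If the answer on $x$ is $L$, then $q_x(F_j)\le 1/2-1/8+\alpha_{BT}$. If the answer is $R$, then $q_x(F_j)\ge 1/2+1/8-\alpha_{BT}$.

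The condition on $k$ in Algorithm~\ref{alg:generic predictor} lets us take the accuracy parameter $\alpha_{BT}=1/8$, since $k\ge \frac{64}{\varepsilon}(\log(T+1)+\log(1/\beta))$ exceeds the required $\frac{8}{\alpha_{BT}\varepsilon}(\cdots)$. Under this choice, an $L$ answer implies $q_x\le 1/2$ and an $R$ answer implies $q_x\ge 1/2$. Hence $h_j(x)=-1$ forces at least half of the $f_{i,j}$ to output $-1$ on $x$, and symmetrically for $+1$.

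Consequently, on a point $x$ with $h_j(x)\ne c(x)$, at least $k/2$ of the hypotheses $f_{i,j}$ disagree with $c$ on $x$. This covers the non-$\top$ cases. The event that $\top$ is output on a query point is handled by the random label; there $q_x\in[1/4,3/4]$, so at least $k/4$ of the hypotheses disagree with $c$ on $x$ whichever label is chosen.

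\textbf{Step 2: averaging.} In all cases, $\mathbf{1}[h_j(x)\ne c(x)]\le \frac{4}{k}\sum_i \mathbf{1}[f_{i,j}(x)\ne c(x)]$. Taking expectation over $x\sim\mathcal{D}$ gives
\[
\mathrm{err}_{\mathcal{D}}(h_j)\le \frac{4}{k}\sum_i \mathrm{err}_{\mathcal{D}}(f_{i,j})\le 4\alpha .
\]
This is the Markov/majority-vote argument.

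\textbf{Step 3: union bound over restarts.} \texttt{BetweenThresholds} is reinitialized after each $\top$, so there are at most $v$ (or $v+1$) independent runs. A union bound over runs gives failure probability at most $v\beta$.

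\textbf{Main obstacle.} The delicate point is interpreting $h_j$ as a deterministic function of $x$ despite the Laplace noise. I would define $h_j$ using the fixed noisy thresholds $\hat t_\ell,\hat t_u$, and argue via Lemma~\ref{lem:bt-accuracy} applied to the counting query on the hypothetical point $x$. If this pointwise treatment is problematic, the fallback is to bound the disagreement region using the thresholds alone, ignoring the per-query noise $\nu_j$.
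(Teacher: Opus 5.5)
Your proposal is correct and follows the paper's proof essentially step for step. Both use accuracy parameter $1/8$ with thresholds $3/8$ and $5/8$, show that every wrong label (including a wrong random label after $\top$, where $q_x(F_j)\in[1/4,3/4]$) forces at least a $1/4$-fraction of the $f_{i,j}$ to err on $x$, average over $x\sim\mathcal{D}$ to get $4\alpha$, and take a union bound over the invocations of \texttt{BetweenThresholds}.

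The pointwise subtlety you flag is glossed over in the paper as well, but your fallback of ignoring $\nu_j$ is not a valid repair. The right repair is to regard $\nu_j$ and the tie-breaking coin as sampled before $x_j$ arrives, so that $h_j$ is a fixed function. The good event behind Lemma~\ref{lem:bt-accuracy} only requires $|\mu|$ and every $|\nu_j|$ to be small, and it then gives the three implications simultaneously for every $x$.
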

\begin{proof}
Let $\mathcal{G}$ be the event that every invocation of \texttt{BetweenThresholds} satisfies the accuracy guarantee
of Lemma~\ref{lem:bt-accuracy}. By a union bound over at most $v$ invocations, $\Pr[\mathcal{G}]\ge 1-v\beta$.
Condition on $\mathcal{G}$.

Fix a round $j$. Note that $ q_x(F_j)$ is exactly the fraction of hypotheses in $F_j$ that output $+1$ on $x$.

Run \texttt{BetweenThresholds} on $q_x(F_j)$ with thresholds $t_\ell=3/8$ and $t_u=5/8$
(and let $\tau=1/8$ denote the accuracy parameter appearing in Lemma~\ref{lem:bt-accuracy}).
Under $\mathcal{G}$, Lemma~\ref{lem:bt-accuracy} implies:
\begin{itemize}
    \item if the output is $L$, then $q_x(F_j)\le t_\ell+\tau$;
    \item if the output is $R$, then $q_x(F_j)\ge t_u-\tau$;
    \item if the output is $\top$, then $q_x(F_j)\in[t_\ell-\tau,\,t_u+\tau]$.
\end{itemize}

We claim that whenever the algorithm outputs an \emph{incorrect} label on $x$, at least a $1/4$-fraction of the
hypotheses $\{f_{i,j}\}_{i=1}^k$ are incorrect on $x$ (with respect to $c$), provided $\tau\le 1/8$.
Indeed:
\begin{itemize}
    \item If the algorithm outputs $-1$ (output $L$) but $c(x)=+1$, then the fraction of $-1$ votes is
    $1- q_x(F_j)\ge 1-(t_\ell+\tau)=5/8-\tau\ge 1/4$.
    \item If the algorithm outputs $+1$ (output $R$) but $c(x)=-1$, then the fraction of $+1$ votes is
    $ q_x(F_j)\ge t_u-\tau=5/8-\tau\ge 1/4$.
    \item If the output is $\top$, the algorithm outputs a uniformly random label; if this label is wrong,
    then either $c(x)=+1$ and the wrong label is $-1$, or vice versa. In either case, since
    $ q_x(F_j)\in[t_\ell-\tau,\,t_u+\tau]$, the fraction of votes for the wrong label is at least
    $\min\{ q_x(F_j),1- q_x(F_j)\}\ge 3/8-\tau\ge 1/4$.
\end{itemize}

Therefore, for every $x$,
\[
\mathbf{1}[\text{algorithm errs on }x]
\;\le\;
4\cdot \frac{1}{k}\sum_{i=1}^k \mathbf{1}[f_{i,j}(x)\neq c(x)].
\]
Taking expectation over $x\sim\mathcal{D}$ and using $\mathrm{err}_{\mathcal{D}}(c,f_{i,j})\le\alpha$ for all $i$ yields
\[
\mathrm{err}_{\mathcal{D}}(c,h_j)
\;\le\;
4\cdot \frac{1}{k}\sum_{i=1}^k \mathrm{err}_{\mathcal{D}}(c,f_{i,j})
\;\le\; 4\alpha.
\]
Since this holds for every $j\in[T]$ under $\mathcal{G}$, the claim follows.
\end{proof}

\subsection{Parameter Setting for the Predictor with Oblivious Adversary}\label{appendix:oblivious}
Let $VC(\mathcal{C})=d$. Set parameter
\begin{itemize}
    \item $\beta_{BT}=\frac{\beta\varepsilon}{(d\log T+\log(d\log T/\alpha\beta\varepsilon\delta))\sqrt{(d\log T+\log(d\log T/\alpha\beta\varepsilon\delta))\cdot\log(d\log T/\alpha\beta\varepsilon\delta)}\cdot\log(d\log T/\alpha\beta\varepsilon\delta)}$
    \item $\varepsilon_{BT}=\frac{\varepsilon}{\sqrt{(d\cdot\log T+\log(d\log T/\alpha\beta\varepsilon\delta))\cdot\ln(d\log T/\alpha\beta\varepsilon\delta)}}$
    \item $\delta_{BT}=\frac{\delta}{d\cdot\log T+\log(d\log T/\alpha\beta\varepsilon\delta)}$
    \item $\alpha_{BT}=\frac{\alpha}{d\cdot\log T+\log(d\log T/\alpha\beta\varepsilon\delta)}$
\end{itemize}

Then we have the number of subsets 
$$
\begin{array}{rl}
    k&= \frac{64}{ \varepsilon_{BT}}\left(\log(T+1) + \log(1/\beta_{BT})\right) \\
     & = O(\frac{\sqrt{(d\cdot\log T+\log(d\log T/\alpha\beta\varepsilon\delta))\cdot\ln(d\log T/\alpha\beta\varepsilon\delta)}\cdot(\log T+\log(\frac{d\log\log T\log\log(1/\alpha\delta)}{\beta\varepsilon})}{\varepsilon})\\
     &=\tilde{O}_{\alpha,\beta,\varepsilon,\delta}(d^{0.5}\cdot \log^{1.5} T)
\end{array}
$$
and the size of each subset 
$$
\begin{array}{rl}
    m&= O\left(\frac{d\cdot\log\frac{d}{\alpha_{BT}}+\log\frac{1}{\beta_{BT}}}{\alpha_{BT}^2}\right) \\
     & = O\left(\frac{(d\cdot\log T+\log(d\log T/\alpha\beta\varepsilon\delta))^2\cdot(d\log\frac{d}{\alpha\beta\varepsilon\delta}+\log\frac{d\log\log T\log\log(1/\alpha\delta)}{\beta\varepsilon})}{\alpha^2}\right)\\
     & =\tilde{O}_{\alpha,\beta,\varepsilon,\delta}(d^3\cdot \log^2 T)
\end{array}
$$
Then we have
$N=mk=\tilde{O}_{\alpha,\beta,\varepsilon,\delta}\left(d^{3.5}\cdot \log^{3.5}T\right)$

\subsection{Parameter Setting for the Predictor of halfspace class}\label{appendix:halfspace}
Set parameter
\begin{itemize}
    \item $\beta_{BT}=\frac{\beta\varepsilon}{d\log T\sqrt{d\log(d\log T/\delta)}\cdot(\log d+\log\log T+\log\log(1/\delta)+\log(1/\varepsilon))}$
    \item $\varepsilon_{BT}=\frac{\varepsilon}{\sqrt{d\cdot\ln(d/\delta)}}$
    \item $\delta_{BT}=\frac{\delta}{d}$
    \item $\alpha_{BT}=\frac{\alpha}{d^2}$
\end{itemize}

Then we have the number of subsets 
$$
\begin{array}{rl}
    k&= \frac{64}{ \varepsilon_{BT}}\left(\log(T+1) + \log(1/\beta_{BT})\right) \\
     & = O(\frac{\sqrt{d\ln(d/\delta)}\cdot(\log T+\log(\frac{d\log\log T\log\log(1/\delta)}{\beta\delta})}{\varepsilon})\\
     &=\tilde{O}_{\alpha,\beta,\varepsilon,\delta}(d^{0.5}\cdot \log T)
\end{array}
$$
and the size of each subset (we ignore the $O(\log d)$ and $O(\log\log T)$ part)
$$
\begin{array}{rl}
    m&= O\left(\frac{d\cdot\log\frac{d}{\alpha_{BT}}+\log\frac{1}{\beta_{BT}}}{\alpha_{BT}^2}\right) \\
     & = O\left(\frac{d^4\cdot(d\log\frac{d}{\alpha}+\log(\frac{d\log\log T\log\log(1/\delta)}{\beta\delta}))}{\alpha^2}\right)\\
     & =\tilde{O}_{\alpha,\beta,\varepsilon,\delta}(d^5)
\end{array}
$$ 
Then we have
$N=mk=\tilde{O}_{\alpha,\beta,\varepsilon,\delta}\left(d^{5.5}\cdot \log T\right)$

\end{document}